\documentclass{article}

\PassOptionsToPackage{numbers, compress}{natbib}
\usepackage[preprint]{neurips_2026}

\usepackage[utf8]{inputenc} % allow utf-8 input
\usepackage[T1]{fontenc}    % use 8-bit T1 fonts
\usepackage{xcolor}
\definecolor{blue1}{HTML}{2E86AB}
\usepackage[colorlinks=true,
            linkcolor=red,
            citecolor=blue1,
            filecolor=blue1,
            urlcolor=blue1]{hyperref}

\usepackage{url}            % simple URL typesetting
\usepackage{booktabs}       % professional-quality tables
\usepackage{amsfonts}       % blackboard math symbols
\usepackage{nicefrac}       % compact symbols for 1/2, etc.
\usepackage{microtype}      % microtypography
\usepackage{xcolor}         % colors

\usepackage{bm}
\usepackage{float}
\usepackage{amsthm}
\usepackage{amsmath}
\usepackage{amssymb}
\usepackage{wrapfig}
\usepackage{colortbl}
\usepackage{graphicx}
\usepackage{multirow}
\usepackage{mathtools}
\usepackage{subcaption}

\RequirePackage{fancyhdr}
\RequirePackage{algorithm}
\RequirePackage[noend]{algorithmic}

\definecolor{white}{gray}{1.0}
\definecolor{shadow}{gray}{0.9}

\theoremstyle{plain}
\newtheorem{theorem}{Theorem}[section]
\newtheorem{proposition}[theorem]{Proposition}
\newtheorem{lemma}[theorem]{Lemma}

\theoremstyle{definition}

\theoremstyle{remark}

\title{Causal Ensemble Agent: Hierarchical Causal Discovery with LLM-guided Expert Reweighting}

\author{%
  Xinyu Li$^1$, Yuanyuan Wang$^2$, Haoxuan Li$^3$, Chuan Zhou$^1$, Erdun Gao$^4$,\\
  \textbf{Bo Han$^5$,} \textbf{Tongliang Liu$^6$,} \textbf{Kun Zhang$^{2,7}$,} \textbf{Howard Bondell$^1$,} \textbf{Mingming Gong$^{1,2}$}\thanks{Corresponding author}\\
  $^1$The University of Melbourne\quad$^2$MBZUAI\quad$^3$Peking University\quad$^4$Adelaide University\\
  $^5$Hong Kong Baptist University\quad$^6$The University of Sydney\quad$^7$Carnegie Mellon University
}

\begin{document}

\maketitle

\begin{abstract}
Causal discovery aims to uncover causal structures from observational data, which is crucial for real-world decision-making. However, different causal discovery algorithms can produce divergent results that conflict with each other, complicating the identification of accurate causal graphs. Traditional approaches rely on numerical values and statistical assumptions, often ignoring rich domain-specific information, such as feature descriptions, which could also help structure learning. While recent works explore using Large Language Models (LLMs) to infer causal relations via direct queries, such methods can be unreliable due to a lack of alignment with the actual data. To address these limitations, we propose \textbf{C}ausal \textbf{E}nsemble \textbf{A}gent (\textbf{CEA}), a novel framework that aggregates structural insights from statistical discovery experts across different graph levels via linear opinion pooling, and uses an LLM as a meta-referee to dynamically reweight experts when the aggregated confidence is close to the decision boundary, thereby composing an improved and more complete causal graph. Extensive experiments on both synthetic and real-world datasets demonstrate that CEA achieves the strongest overall performance across a wide range of causal discovery methods, highlighting the effectiveness of using LLMs for meta-analysis in causal discovery.
\end{abstract}

\section{Introduction}
\label{introduction}
Causal discovery aims to identify the underlying cause-and-effect structure among variables from observational data~\cite{pearl2009causality, glymour2019review}, providing an essential foundation for decision-making in many real-world domains such as healthcare, economics, transportation, and epidemiology. Over the past decades, Statistical Causal Discovery (SCD) has made substantial progress, ranging from constraint-based~\cite{spirtes2000causation} and score-based approaches~\cite{chickering2002optimal, zheng2018dags} to functional causal models~\cite{shimizu2006a, hoyer2008nonlinear, zhang2009identifiability}, enabling scalable and theoretically grounded structure learning under well-defined assumptions. More recently, Large Language Models (LLMs), with strong natural language understanding and reasoning abilities~\cite{zhao2025a}, have emerged as a promising complement for causal discovery~\cite{long2022can,kıcıman2024causal}. By integrating information from vast amounts of scientific text and domain knowledge, LLMs can interpret variable semantics and feature descriptions, reason about plausible mechanisms, and propose causal relations that may be overlooked by statistical, data-only discovery methods~\cite{abdulaal2024causal}.

\begin{figure}[h]
    \centering
    \includegraphics[width=0.99\linewidth]{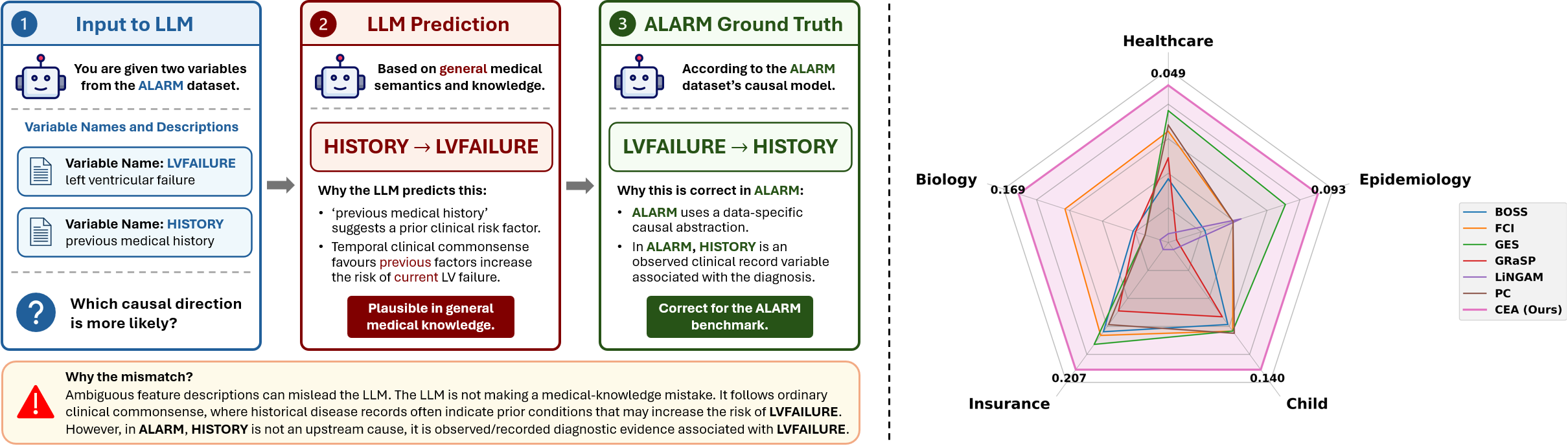}
    \caption{\textit{Left:} Hallucination problem in direct LLM-based causal querying with misleading feature descriptions. \textit{Right:} Causal discovery performance across different data domains. SCD algorithms can be domain-dependent, with performance degrading markedly when moving to a new domain, while our \textbf{CEA} remains effective across all domains and consistently achieves the best results.}
    \label{fig1}
\end{figure}
\paragraph{Challenges.} Despite these advances, accurately recovering causal graphs in real-world settings remains challenging~\citep{stein2025causalrivers}. While SCD is mathematically rigorous and empirically stable under its assumptions, different SCD algorithms may return conflicting graphs on the same dataset due to distinct inductive biases and objective functions, and their performance can be highly scenario-dependent (i.e., a method that excels on one data domain may fail in another, Figure~\ref{fig1} Right), making it difficult to select a single “best” solver for an unseen dataset~\citep{zhang2023a, vo2026causal}. On the other hand, LLM-based Causal Discovery (LCD) often requires querying an LLM for causal relations over all possible pairs of variables, which is time-consuming and difficult to scale~\cite{jiralerspong2024efficient}. Besides, although LLMs are capable of exploiting semantic cues and external knowledge, their predictions often lack alignment with the actual data and are therefore prone to unreliability and hallucination problems~\cite{ji2023survey}. This issue is further exacerbated when variable names are vague or feature descriptions are incomplete or misleading, in which case the model may produce incorrect causal assertions (Figure~\ref{fig1} Left).

\paragraph{Motivations.} Considering LLMs' strength in analyzing and coordinating myriad information, we seek a different but more reliable role for LLMs in causal discovery: \textit{\textbf{instead of treating an LLM as a “causal expert” that directly infers causal relations, we can use it as a meta-referee that adjudicates among competing, data-driven hypotheses}}. Concretely, running several SCD methods is often easy, but their disagreements may encode valuable information. Consistent causal relations across methods tend to be reliable, whereas conflicting ones highlight uncertainty that stems from data limitations and method biases. An LLM is particularly well-suited to interpret and judge this uncertainty because it can jointly reason over heterogeneous method assumptions, failure modes, variable semantics, and dataset profiles, thereby deciding which methods should be trusted more for which parts of the graph.

In light of the aforementioned problems and motivations, we explore the potential of using LLMs for \emph{meta-analysis} in causal discovery: leveraging an LLM to \emph{selectively} coordinate and calibrate multiple statistical causal discovery experts. In this paper, we propose \textbf{C}ausal \textbf{E}nsemble \textbf{A}gent (\textbf{CEA}), a hierarchical ensemble framework that performs relation-level linear opinion pooling over heterogeneous SCD experts, and invokes an LLM as a meta-referee to dynamically reweight experts \emph{only} when the aggregated confidence is close to the decision boundary. Unlike existing approaches that either aggregate causal graphs with numerically computed weights or query LLMs to directly infer causal relations, we use an LLM to estimate expert credibility by jointly considering variable semantics, dataset characteristics, and expert-data compatibility. In this way, \textbf{CEA} mitigates benchmark-leakage concerns by avoiding direct edge prediction and restricting the LLM's role to expert reweighting, while keeping the overall procedure grounded in data and leaving expert decisions unchanged. Experimentally, \textbf{CEA} is compatible with a diverse set of mainstream causal discovery methods and achieves the strongest overall performance across eight synthetic and real-world benchmarks, demonstrating superior robustness while requiring substantially fewer LLM queries than exhaustive LLM-based causal discovery. The contributions of our work are threefold:
\begin{itemize}
    \item We introduce a new role for LLMs in causal discovery: \emph{meta-referees} that adjudicate among competing, data-driven hypotheses from multiple SCD experts, enabling reliable integration of domain knowledge without sacrificing data alignment.
    \item We propose \textbf{CEA}, a hierarchical causal discovery ensemble framework that aggregates expert beliefs in a coarse-to-fine manner over skeletons, v-structures, and edge orientations, with margin-aware LLM-guided expert reweighting for disputed relations.
    \item Extensive experiments on both synthetic and real-world datasets show that CEA outperforms competitive baselines, highlighting remarkable improvements in causal discovery accuracy and cross-domain robustness with substantially fewer LLM calls.
\end{itemize}

\section{Related Work}
\label{related_work}
Beyond statistical structure learning from observational data, causal discovery has also been approached from two alternative perspectives: ensemble-based methods that aggregate multiple candidate graphs, and LLM-based methods that exploit domain knowledge for direct causal reasoning.

\subsection{Ensemble-based Causal Discovery}
Ensemble-based methods aim to improve the robustness of causal discovery by aggregating multiple candidate graphs into a more reliable consensus structure. Early studies mainly follow the bootstrap aggregation paradigm, where the same causal discovery algorithm is repeatedly applied to resampled datasets, and the resulting graphs are then aggregated to reduce estimation variance. \cite{friedman2013data} proposes a bootstrap-based analysis framework for Bayesian networks, where edge-selection frequencies are used to measure structural stability and uncertainty. DAGBag \cite{wang2014learning} learns an ensemble of Directed Acyclic Graphs (DAGs) from bootstrap samples, aiming to find an average graph that minimizes the structural Hamming distance to all graphs. Recent studies further extend causal discovery ensembles to heterogeneous graph aggregation, where outputs from multiple distinct causal discovery algorithms are aggregated. The work in \cite{guo2021scalable} introduces a scalable and flexible two-phase ensemble framework that combines data partitioning with ensemble learning to mitigate the divergence across algorithms. \cite{aslani2023ensemble} formulates heterogeneous DAG aggregation from an optimization perspective, using a two-stage pipeline and lazy constraints to obtain a consensus DAG. WECD \cite{zhang2023a} utilizes weighted ensemble causal discovery for effective connectivity estimation while improving robustness in noisy and small-sample settings. Despite their effectiveness, existing ensemble-based causal discovery methods usually determine expert weights through numerical criteria or manually designed weighting schemes. In contrast, our CEA framework uses an LLM as a meta-referee to estimate relation-specific expert credibility from rich natural language evidence, making the ensemble process more context-aware.

\subsection{LLM-based Causal Discovery}
Large Language Models (LLMs) have recently been explored as a new source of causal knowledge, owing to their ability to interpret variable semantics and leverage domain knowledge acquired from large-scale scientific corpora. Existing LLM-based causal discovery methods can be broadly categorized into two lines. One is using \emph{LLMs for direct inference} that queries LLMs to infer causal relations from variable names, feature descriptions, or retrieved textual evidence. \cite{willig2022can} examines whether foundation models can reason about causality, while \cite{long2022can} evaluates whether LLMs can build causal graphs by scoring candidate causal relations. \cite{kıcıman2024causal} further studies LLMs on a range of causal reasoning tasks, including pairwise causal discovery and counterfactual reasoning. To improve scalability, \cite{jiralerspong2024efficient} reduces the number of LLM queries through a breadth-first search. To improve grounding, \cite{zhang2024causal} introduces a retrieval-augmented generation framework that incorporates scientific evidence for structure discovery. The other is using \emph{LLMs for posterior correction}, which uses LLMs to refine, validate, or correct causal structures produced by SCD methods. \cite{long2023causal} treats language models as imperfect experts for orienting causal relations beyond Markov equivalence classes. \cite{takayama2025integrating} integrates LLM-derived background knowledge into SCD through prior knowledge augmentation. CMA \cite{abdulaal2024causal} synergies metadata-based LLM reasoning with data-driven structural causal modelling. Both lines still rely on the LLM to directly validate, orient, add, remove, or correct causal edges, which may introduce unreliable judgments into the final graph. In contrast, our CEA does not treat the LLM as a causal expert that directly predicts or modifies causal relations. Instead, by restricting the LLM's role to estimating expert weights, CEA keeps the final graph grounded in data-driven SCD outputs while still leveraging semantic and domain knowledge for context-aware calibration.

\section{Causal Ensemble Agent (CEA)}
\label{methodology}
Figure~\ref{CEA} illustrates the overall framework of our Causal Ensemble Agent (CEA), which constructs the causal graph in a coarse-to-fine manner through three hierarchical layers that aggregate (a)~graph skeletons, (b)~v-structures, and (c)~edge orientations across multiple SCD experts. At each layer, expert beliefs are pooled via \emph{linear opinion pooling} of bootstrap-based support scores, and an LLM is invoked as a meta-referee on disputed relations to estimate \emph{relation-specific, context-dependent expert weights}. The resulting relations are integrated across layers to produce the final graph $\widehat{G}$.

\paragraph{Problem Definition.} Consider $m$ SCD experts $\{e_1,\ldots,e_m\}$, each expert $e_i:\mathcal{X}\mapsto\mathcal{G}$ can recover a causal graph $\widehat{G}_i = e_i(X)$ from the observational data $X\in\mathcal{X}\coloneqq\mathbb{R}^{n\times|V|}$, where $n$ is the sample size, $|V|$ is the number of variables. The estimated graph can be a DAG $G_{\text{dag}}=(V,E)$, where $E\subseteq V \times V$ denotes the set of directed edges, a Completed Partially Directed Acyclic Graph (CPDAG) $G_{\text{cpdag}}=(V,E\cup E_u)$ with directed edges $E$ and undirected edges $E_u$, or a Partial Ancestral Graph (PAG) $G_{\text{pag}} = (V, E^{\{-,\blacktriangleright,\circ\}})$ whose endpoint marks are in ${\{-,\blacktriangleright,\circ\}}$. Our goal is to ensemble $\{\widehat{G}_1,\ldots,\widehat{G}_m\}$ into a single higher-quality DAG estimate $\widehat G$ under the guidance of an LLM.
\begin{figure*}
    \centering
    \includegraphics[width=0.975\linewidth]{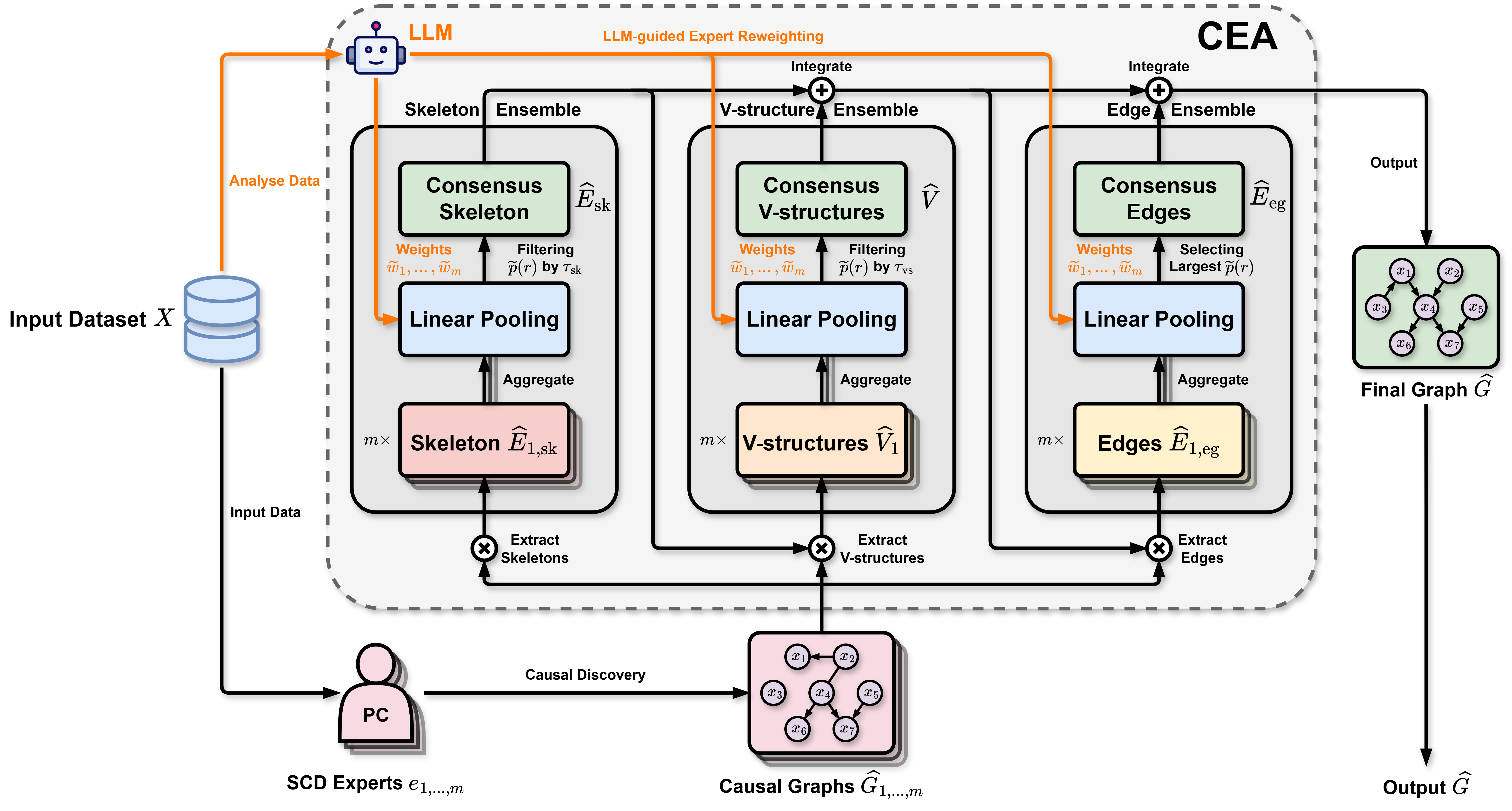}
    \caption{The overview of CEA. Given input data, multiple SCD experts first produce candidate causal graphs. CEA then aggregates expert beliefs through linear opinion pooling and constructs the final graph in a hierarchical, coarse-to-fine manner. For causal relations whose aggregated confidence lies near the decision boundary, an LLM is invoked as a meta-referee to perform context-aware expert reweighting, resolving disagreements while keeping the overall procedure data-driven.}
    \label{CEA}
\end{figure*}
\subsection{Expert Aggregation via Linear Pooling}
We study the problem of aggregating heterogeneous expert beliefs on a candidate causal relation $r$ among a subset of variables. Let $\{e_1,\ldots,e_m\}$ denote the set of experts. Given observational data $X$, we estimate an empirical support score $\hat p_i(r)\in[0,1]$ via repeated subsampling. Let $X^{(1)},\ldots,X^{(B)}$ denote $B$ resampled datasets and $\widehat G_i^{(b)}$ the graph returned by expert $e_i$ on $X^{(b)}$. We define
\begin{equation}
\hat p_i(r)\coloneqq\frac{1}{B}\sum_{b=1}^{B}\mathbf{1}\bigl\{r\in T_\kappa\bigl(\widehat G_i^{(b)}\bigr)\bigr\},
\label{eq:support_score}
\end{equation}
where $\kappa\in\{\mathrm{sk},\mathrm{vs},\mathrm{eg}\}$ indexes the relation family (skeleton, v-structure, or edge orientation) and $T_\kappa(\cdot)$ extracts the corresponding relation set from a graph. Thus $\hat p_i(r)$ is the selection frequency with which expert $e_i$ supports relation $r$ across resamples, and larger values indicate stronger support.

\paragraph{Linear Opinion Pooling.} Given \emph{normalized} expert credibility weights $\alpha_i\ge 0$ with $\sum_{i=1}^m \alpha_i=1$, the pooled support score of relation $r$ is
\begin{equation}
\hat p(r)=\sum_{i=1}^m \alpha_i\,\hat p_i(r).
\label{eq:linear_pool}
\end{equation}
Unless otherwise specified, we initialize $\alpha_i=1/m$, treating all experts with equal credibility.

\paragraph{Margin-aware Disputed Set.} For $\kappa\in\{\mathrm{sk},\mathrm{vs}\}$, let $\tau_{\kappa}\in(0,1)$ be the acceptance threshold and $\delta>0$ a margin coefficient. We call a candidate relation $r\in\mathcal{R}_{\kappa}$ \emph{disputed} whenever
\begin{equation}
\bigl|\hat p(r)-\tau_{\kappa}\bigr|\le \delta\,\tau_{\kappa}.
\label{eq:disputed_set}
\end{equation}
Only disputed relations are sent to the LLM for expert reweighting. The edge orientation family ($\kappa=\mathrm{eg}$) uses a different dispute criterion based on the strict-maximum rule (which will be discussed later in Section \ref{hierarchical_ensemble}). Intuitively, bootstrap averaging makes $\hat p(r)$ stable, so only relations near the threshold $\tau_{\kappa}$ are at meaningful risk of being misclassified and are more sensitive to method-specific biases and domain shift. Proposition~\ref{prop:concentration} formalizes this intuition: the pooled support satisfies a Hoeffding-type concentration bound, and the probability of crossing the threshold decays exponentially with both the number of resamples $B$ and the squared distance from the threshold.

\paragraph{LLM-guided Reweighting.} For each disputed relation $r$, we invoke an LLM as a meta-referee to produce raw expert-weight scores $w_i(r)\in\{0,1,\ldots,10\}$ for all experts, with $\sum_{j=1}^m w_j(r)>0$ enforced by our structured prompt, which requires the LLM to identify at least one credible expert for the relation. We then normalize these scores into relation-specific linear pooling weights:
\begin{equation}
\alpha_i'(r)=\frac{w_i(r)}{\sum_{j=1}^m w_j(r)},
\label{eq:llm_weights}
\end{equation}
and recompute the refined pooled support as
\begin{equation}
\hat p'(r)=\sum_{i=1}^m \alpha_i'(r)\,\hat p_i(r).
\label{eq:reweighted_pool}
\end{equation}
The reweighting changes only the expert coefficients and does not alter the underlying expert outputs $\hat p_i(r)$. For notational convenience in the ensemble procedure, we define the \emph{final} pooled score as
\begin{equation}
\tilde p(r)\coloneqq
\begin{cases}
\hat p'(r), & \text{if } r \text{ is disputed},\\
\hat p(r), & \text{otherwise}.
\end{cases}
\label{eq:final_pool}
\end{equation}
Candidates outside the dispute margin retain the base pooling score $\hat p(r)$, so LLM calls are reserved for borderline cases only, reducing unnecessary inference cost without sacrificing ensemble quality.

\subsection{LLM as a Meta-referee}
Rather than directly querying LLMs to infer pairwise causal relations as previous methods \cite{kıcıman2024causal}, our approach positions the LLM as a \emph{meta-referee} that intervenes only when data-driven methods are disputed and resolves discrepancies by reweighting experts. Structurally, the linear-pooling form guarantees that LLM reweighting can only redistribute trust among experts, with the resulting score always lying between the smallest and largest individual expert scores (Lemma~\ref{lem:convex_hull}). Specifically, for each disputed causal relation $r$, the LLM performs a meta-analysis over the following information:

\paragraph{Meta Context.} To ground the LLM's adjudication with precise information and sufficient background evidence, we first prompt the LLM with a \emph{meta context}. This includes:
(i) \emph{\textbf{Causal Relation Type}} ($\mathrm{sk}$ / $\mathrm{vs}$ / $\mathrm{eg}$), which explicitly specifies the graph structural level under consideration, encouraging the LLM to apply the appropriate graph rules (e.g., adjacency vs. orientation) and, crucially, to differentiate which experts are expected to be more reliable at that level (i.e., some methods are stronger at skeleton recovery while others are more accurate in orienting edges);
(ii) \emph{\textbf{Variable Names and Descriptions}}, which enable the LLM to leverage domain knowledge (e.g., time order, physical constraints, measurement meanings) to assess whether a candidate adjacency or direction is causally plausible, thereby reducing spurious inferences;
(iii) \emph{\textbf{Dataset Profile}}. Rather than providing only a high-level domain narrative, we provide \emph{automatically} summarized dataset statistics so that the LLM can check whether each expert’s assumptions and methodology plausibly hold for the observed data, and thus reweight experts more appropriately; and
(iv) \emph{\textbf{Agreement Summary}} (how many experts support vs. oppose), which provides a coarse but stable aggregation signal, helping the LLM calibrate its adjudication and avoid overreacting to a single expert when the ensemble exhibits broad consensus.

\paragraph{Expert Opinions.} Analogous to a meta-review process, we cast the LLM as a \emph{Meta-referee} and treat SCD experts as \emph{Reviewers} that provide assessments for the candidate relation, enabling the LLM to compare arguments rather than raw votes. For each expert, we construct an \emph{opinion} containing:
(i) \emph{\textbf{Methodology Profile}}, which includes the expert name and method type (e.g., constraint-based, score-based or FCM), key assumptions (e.g., causal sufficiency, faithfulness, functional form, noise model), and any latent confounder handling capability so the LLM can distinguish what statistical signal the expert is using and whether its characteristics are compatible with the dataset;
(ii) \emph{\textbf{Expert Stance}}, which shows the expert's explicit support/oppose information for the candidate relation, removing ambiguity and making the opinion directly comparable across experts; and
(iii) \emph{\textbf{Expert Configuration}}, which contains the score function or conditional-independence test method that the expert used, with regularization and penalty choices, and known strengths or failure modes. These factors help the LLM attribute disagreements among experts to method-specific biases and better weight experts by calibrating how suitable the expert is for the current data.

\paragraph{LLM Response.} Given the meta context and expert opinions, the LLM conducts a relation-specific meta-analysis to 1) evaluate \emph{expert-dataset compatibility}, 2) assess \emph{causal plausibility}, and 3) judge which expert is more reliable at the current graph level. The LLM then outputs a raw weight score $w_i(r)$ for each expert $e_i$, which is normalized to linear pooling weights $\alpha_i'(r)$ via Eq.~\eqref{eq:llm_weights} and used to compute the refined support $\hat p'(r)$ via Eq.~\eqref{eq:reweighted_pool}. We require a JSON response from the LLM, together with a brief per-expert analysis and a short weight justification for decision transparency.

\subsection{Hierarchical Ensemble}
\label{hierarchical_ensemble}
We consider three types of structural relations in the causal graph, indexed by $\kappa\in\{\mathrm{sk},\mathrm{vs},\mathrm{eg}\}$ for graph skeletons, v-structures, and edge orientations, respectively. CEA ensembles these relations layer by layer in a coarse-to-fine, structure-to-direction manner, detailed below.

\paragraph{Skeleton Ensemble.} We start by ensembling the skeletons of causal graphs produced by different experts. Given the skeleton edge set $\widehat{E}_{i,\mathrm{sk}}$ (a set of undirected edges) extracted from each expert's causal graph $\widehat{G}_i$, we pool their beliefs on every skeleton relation $r$ and define the ensembled edge set
\begin{equation}
\widehat{E}_{\mathrm{sk}}=\Bigl\{r\in \bigcup_{i=1}^{m}\widehat{E}_{i,\mathrm{sk}}:\tilde p(r)\ge \tau_{\mathrm{sk}}\Bigr\},
\label{eq:skeleton_ensemble}
\end{equation}
where $\tilde p(r)$ is the final pooled score defined in Eq.~\eqref{eq:final_pool} and $\tau_{\mathrm{sk}}$ is the skeleton acceptance threshold. This yields the consensus graph skeleton $\widehat{G}_{\mathrm{sk}}=(V,\widehat{E}_{\mathrm{sk}})$.

\paragraph{V-structure Ensemble.} Based on the consensus skeleton $\widehat{G}_{\mathrm{sk}}$, let $\widehat{\mathcal{V}}_i$ denote the set of v-structures extracted from expert $e_i$'s graph. Each candidate v-structure $r=(a,b,c)$ corresponds to $a\!\to\!b\!\leftarrow\!c$ and is \emph{valid} only if $\{a,b\},\{b,c\}\in\widehat{E}_{\mathrm{sk}}$ and $a,c$ are nonadjacent in $\widehat{G}_{\mathrm{sk}}$. The consensus v-structure set $\widehat{\mathcal{V}}$ can be obtained by:
\begin{equation}
\widehat{\mathcal{V}}=\Bigl\{r\in\bigcup_{i=1}^{m}\widehat{\mathcal{V}}_i : \tilde p(r)\ge \tau_{\mathrm{vs}}\Bigr\}.
\label{eq:vstructure_ensemble}
\end{equation}
We then sort candidates by descending $\tilde p(r)$ and iteratively insert \emph{non-conflicting} (i.e., not contradicting any previously oriented edge) v-structures into the current $\widehat{G}_{\mathrm{sk}}$. After each insertion, we check whether the newly oriented edges introduce a directed cycle; if so, the insertion is rejected and the graph is rolled back. This yields a partially directed acyclic graph $\widehat{G}_{\mathrm{vs}}$. 
Since Eqs.~\eqref{eq:skeleton_ensemble}--\eqref{eq:vstructure_ensemble} only threshold expert-generated candidates, the resulting skeleton edges and v-structures remain within the expert candidate pool, and any adjacency excluded from $\widehat{E}_{\mathrm{sk}}$ cannot be reintroduced by later orientation steps (Proposition~\ref{prop:candidate_coverage}). Under candidate coverage and limiting score separation, the threshold-based skeleton and v-structure selections are asymptotically consistent (Theorem~\ref{thm:consistency}).

\paragraph{Edge Orientation Ensemble.} In this final stage, we aim to resolve all remaining undirected edges in $\widehat{G}_{\mathrm{vs}}$. For each undirected pair $\{a,b\}$ with three candidate relations $r\in\mathcal{C}_{ab}\coloneqq\{a\!\to\!b, b\!\to\!a, a\!-\!b\}$, each expert $e_i$ provides a support score $\hat p_i(r)$, where $\hat p_i(r)=0$ when expert $e_i$'s graph does not include the edge $\{a,b\}$. If one of the two directed candidates $a\!\to\!b$ or $b\!\to\!a$ attains a strict maximum $\hat p(r)$ over $\mathcal{C}_{ab}$, we tentatively orient $\{a,b\}$ accordingly. Otherwise, the pair is treated as orientation-disputed and is forwarded to the LLM, which estimates pair-specific expert weights $\alpha_i'(r)$ via Eq.~\eqref{eq:llm_weights}. Using these weights, we recompute $\hat p'(r)$ and re-check whether a directed candidate now strictly dominates. For each resolved pair with a determined direction, we define its evidence strength as
\begin{equation}
\pi(a,b)=\sum_{r\in\mathcal{C}_{ab}}\tilde p(r).
\label{eq:edge_confidence}
\end{equation}
We then insert the resolved orientations into $\widehat{G}_{\mathrm{vs}}$ in descending order of $\pi(a,b)$. After each insertion, we apply Meek-rule closure to propagate compelled directions, and skip any tentative orientation that would conflict with an already-compelled direction or introduce a directed cycle. If some edges still remain undirected after this process, typically because the available expert evidence supports $a\!-\!b$ only, we use the LLM only as a last-resort completion aid and accept only acyclicity-preserving directions, yielding the final DAG $\widehat{G}$; this validity is guaranteed by Proposition~\ref{prop:final_dag_validity}.

\section{Experiments}
\label{experiments}
\paragraph{Datasets.} We comprehensively include eight benchmark datasets commonly used in causal discovery for our experiments, covering both synthetic and real-world scenarios across various domains. Specifically, we use five synthetic datasets: \textbf{Alarm}, \textbf{Asia}, \textbf{Child}, \textbf{Insurance}, and \textbf{Sachs} \cite{lauritzen1990local}, and three real-world datasets: \textbf{Alzheimer} \cite{abdulaal2024causal}, \textbf{Arctic Sea} \cite{huang2021benchmarking}, and \textbf{Sangiovese} \cite{kıcıman2024causal}. For the synthetic benchmarks, the ground-truth causal graphs are specified as Bayesian networks with conditional probability tables from the \texttt{bnlearn} repository, and observational data are generated by sampling from the corresponding network distributions \cite{scutari2021bayesian}. To evaluate performance under different data regimes, we simulate four sample sizes $n\in\{500,1000,5000,10000\}$ for each synthetic dataset. For real-world datasets, we use the fixed sample sizes and ground-truth graphs provided by the original benchmarks. Detailed dataset descriptions are provided in Appendix~\ref{app:dataset_details}.

\paragraph{Baselines.} We carefully select three groups of causal discovery methods as our baselines for comparison: SCD methods, ensemble-based methods, and LLM-based methods. For SCD baselines, we cover a diverse range of search paradigms, including: (i) constraint-based methods, \textbf{PC} \cite{spirtes2000causation} and \textbf{FCI} \cite{spirtes1995causal}, (ii) the score-based method, \textbf{GES} \cite{chickering2002optimal}, (iii) permutation-based methods, \textbf{BOSS} \cite{andrews2023fast} and \textbf{GRaSP} \cite{lam2022greedy}, and (iv) the functional causal model, \textbf{ICA-LiNGAM} \cite{shimizu2006a}. These six methods also serve as the SCD experts integrated by CEA. For ensemble-based baselines, we include \textbf{DAGBag} \cite{wang2014learning} and \textbf{Uniform-CEA}, where Uniform-CEA is a controlled non-LLM variant of CEA that keeps the same hierarchical ensemble pipeline but replaces LLM-guided relation-specific weights with uniform weights, serving as a strong ensemble method. For LLM-based baselines, we include \textbf{LLM-Greedy} \cite{long2023causal} and \textbf{LLM-BFS} \cite{jiralerspong2024efficient}, which directly query an LLM to infer causal relations under different strategies. All methods are evaluated under the same experimental environment and data settings to ensure fairness. Further details on baselines are in Appendix~\ref{app:baseline_details}.

\subsection{Main Results}
\begin{table}[h]
    \caption{Causal discovery results, synthetic datasets are averaged over respective sample sizes of $n \in \{500,1000,5000,10000\}$ and real-world datasets are marked with $*$. Full Table in Appendix \ref{app:full_results}.}
    \label{tab_main}
    \begin{center}\resizebox{\textwidth}{!}{
    \begin{tabular}{c|cc|cc|cc|cc|cc|cc|cc|cc}
    \toprule
        Datasets & \multicolumn{2}{c}{Alarm} & \multicolumn{2}{c}{Asia} & \multicolumn{2}{c}{Child} & \multicolumn{2}{c}{Insurance} & \multicolumn{2}{c}{Sachs} & \multicolumn{2}{c}{Alzheimer*} & \multicolumn{2}{c}{Arctic-Sea*} & \multicolumn{2}{c}{Sangiovese*} \\
        \cmidrule(lr){2-3} \cmidrule(lr){4-5} \cmidrule(lr){6-7} \cmidrule(lr){8-9} \cmidrule(lr){10-11} \cmidrule(lr){12-13} \cmidrule(lr){14-15} \cmidrule(lr){16-17} 
        Metric & RNB ($\downarrow$) & AUC ($\uparrow$) & RNB ($\downarrow$) & AUC ($\uparrow$) & RNB ($\downarrow$) & AUC ($\uparrow$) & RNB ($\downarrow$) & AUC ($\uparrow$) & RNB ($\downarrow$) & AUC ($\uparrow$) & RNB ($\downarrow$) & AUC ($\uparrow$) & RNB ($\downarrow$) & AUC ($\uparrow$) & RNB ($\downarrow$) & AUC ($\uparrow$) \\
        \specialrule{0.75pt}{0.0pt}{2.5pt}
        BOSS & 0.381 & 0.713 & 0.531 & 0.645 & 0.345 & 0.747 & 0.351 & 0.670 & 0.515 & 0.694 & \textcolor{red}{\textbf{0.125}} & \textcolor{red}{\textbf{0.900}} & 0.542 & 0.483 & \textcolor{red}{\textbf{0.343}} & \textcolor{red}{\textbf{0.683}} \\
        % \specialrule{0.5pt}{1.5pt}{2.5pt}
        FCI & 0.212 & 0.795 & 0.422 & 0.588 & 0.315 & 0.711 & 0.337 & 0.648 & \underline{\textcolor{blue}{0.309}} & \underline{\textcolor{blue}{0.761}} & 0.281 & 0.734 & 0.490 & 0.523 & 0.480 & 0.532 \\
        % \specialrule{0.5pt}{1.5pt}{2.5pt}
        GES & \underline{\textcolor{blue}{0.139}} & \underline{\textcolor{blue}{0.865}} & 0.219 & 0.806 & 0.315 & \underline{\textcolor{blue}{0.772}} & 0.303 & 0.688 & 0.551 & 0.597 & 0.219 & 0.807 & 0.562 & 0.439 & 0.569 & 0.451 \\
        % \specialrule{0.5pt}{1.5pt}{2.5pt}
        GRaSP & 0.307 & 0.737 & 0.641 & 0.497 & 0.380 & 0.721 & 0.430 & 0.594 & 0.522 & 0.659 & 0.250 & 0.775 & 0.531 & 0.490 & 0.431 & 0.597 \\
        % \specialrule{0.5pt}{1.5pt}{2.5pt}
        ICA-LiNGAM & 0.576 & 0.553 & 0.391 & 0.628 & 0.685 & 0.440 & 0.663 & 0.376 & 0.596 & 0.443 & 0.438 & 0.585 & 0.594 & 0.453 & 0.765 & 0.393 \\
        % \specialrule{0.5pt}{1.5pt}{2.5pt}
        PC & 0.190 & 0.808 & 0.422 & 0.588 & 0.305 & 0.712 & 0.378 & 0.600 & 0.551 & 0.525 & 0.406 & 0.592 & \underline{\textcolor{blue}{0.438}} & 0.583 & 0.441 & 0.557 \\
        \specialrule{0.75pt}{1.5pt}{2.5pt}
        DAGBag & 0.709 & 0.430 & 0.406 & 0.599 & 0.630 & 0.476 & 0.445 & 0.566 & 0.360 & 0.677 & 0.375 & 0.633 & 0.531 & 0.443 & 0.539 & 0.431 \\
        % \specialrule{0.5pt}{1.5pt}{2.5pt}
        Uniform CEA & 0.147 & \underline{\textcolor{blue}{0.865}} & 0.281 & 0.732 & \underline{\textcolor{blue}{0.260}} & 0.755 & 0.300 & 0.683 & 0.456 & 0.691 & 0.219 & 0.798 & 0.469 & 0.549 & 0.412 & 0.612 \\
        \specialrule{0.75pt}{1.5pt}{2.5pt}
        LLM-Greedy & 0.457 & 0.551 & 0.625 & 0.414 & 0.440 & 0.574 & 0.327 & 0.696 & 0.412 & 0.617 & 0.438 & 0.606 & 0.448 & \underline{\textcolor{blue}{0.591}} & 0.471 & 0.591 \\
        LLM-BFS & 0.489 & 0.545 & \underline{\textcolor{blue}{0.188}} & \underline{\textcolor{blue}{0.834}} & 0.340 & 0.677 & \underline{\textcolor{blue}{0.279}} & \underline{\textcolor{blue}{0.717}} & 0.559 & 0.007 & \underline{\textcolor{blue}{0.156}} & \underline{\textcolor{blue}{0.861}} & 0.458 & 0.583 & 0.539 & 0.421 \\
        \specialrule{0.75pt}{1.5pt}{2.5pt}
        \textbf{CEA (Ours)} & \textcolor{red}{\textbf{0.049}} & \textcolor{red}{\textbf{0.952}} & \textcolor{red}{\textbf{0.093}} & \textcolor{red}{\textbf{0.918}} & \textcolor{red}{\textbf{0.140}} & \textcolor{red}{\textbf{0.864}} & \textcolor{red}{\textbf{0.207}} & \textcolor{red}{\textbf{0.793}} & \textcolor{red}{\textbf{0.169}} & \textcolor{red}{\textbf{0.838}} & \textcolor{red}{\textbf{0.125}} & \textcolor{red}{\textbf{0.900}} & \textcolor{red}{\textbf{0.427}} & \textcolor{red}{\textbf{0.597}} & \underline{\textcolor{blue}{0.373}} & \underline{\textcolor{blue}{0.652}} \\
    \bottomrule
    \end{tabular}
    }\end{center}
\end{table}
Table~\ref{tab_main} reports the causal discovery results on eight benchmark datasets, where synthetic datasets are averaged over four sample sizes and real-world datasets are marked with $*$. We employ \texttt{GPT-5.5} as the backbone LLM and evaluate all methods using two metrics: \textbf{RNB} (Ratio between Normalized Hamming Distance and Baseline Hamming Distance) and \textbf{AUC} (Area Under the Precision-Recall Curve), where lower RNB and higher AUC indicate better performance. The best values are in \textcolor{red}{\textbf{bold}}, and the second-best are \underline{\textcolor{blue}{underlined}}. Overall, CEA achieves the strongest performance across diverse domains, obtaining an average relative RNB reduction of 28.3\% and an average AUC improvement of 6.2\% compared with the strongest non-CEA baselines over datasets. These gains indicate that the proposed hierarchical ensemble with LLM-guided expert reweighting can effectively exploit complementary signals from heterogeneous SCD methods, rather than relying on a single solver. The gains are particularly pronounced on \textit{Alarm}, \textit{Asia}, \textit{Child}, and \textit{Sachs}, where individual SCD methods exhibit clear domain dependence, while CEA mitigates this solver-selection problem and delivers robust cross-domain performance. On \textit{Arctic-Sea*}, the relatively smaller gain can be attributed to its ground-truth structure containing many bidirectional relations and cycles, which departs from the DAG-oriented output space and makes our DAG recovery intrinsically harder. Notably, Uniform-CEA already performs competitively among non-LLM baselines, and our CEA further improves over it across benchmarks, showing that context-aware expert reweighting provides additional benefits beyond uniform ensembling. We provide a more detailed ablation of this effect in Section~\ref{method_analysis}.

\subsection{Method Analysis}
\label{method_analysis}
\paragraph{Ablation Study 1.} We consider two main ablations: 1) We first examine whether LLM-guided expert reweighting is essential to the performance of CEA, and replace the LLM estimated weights (w/ LLM) with uniform weights (w/o LLM). This variant corresponds to Uniform-CEA in Table~\ref{tab_main}, and any performance gap between w/ LLM and w/o LLM directly reflects the contribution of expert weights estimation; 2) We then investigate whether the LLM is better used as a meta-referee than as a direct causal expert, where the (Causal Expert) variant uses the same meta context as CEA but asks the LLM to infer the full causal graph directly. As shown in Table~\ref{tab_ablation1}, replacing LLM-guided expert weights with uniform weights consistently degrades CEA's performance across all datasets, leading to a relative drop of \textbf{41.9\%} in RNB and \textbf{12.4\%} in AUC. This indicates that simply ensembling heterogeneous SCD experts is insufficient: different experts should be trusted differently depending on the relation type, dataset characteristics, and expert-data compatibility. LLM-guided reweighting provides this relation-specific calibration, enabling CEA to resolve expert disagreements more effectively than uniform hierarchical ensembling. Moreover, replacing the LLM's role from meta-referee to direct causal expert also weakens overall performance, leading to a relative drop of \textbf{35.7\%} in RNB and \textbf{15.2\%} in AUC. This supports our central design choice: LLMs are more reliable when used to calibrate data-driven experts than when asked to directly infer causal graphs. Notably, the direct causal expert achieves perfect performance on \textit{Asia}. We regard this as a potential benchmark-leakage signal, since \textit{Asia} is a canonical causal discovery benchmark whose graph structure is likely to appear in public materials. Direct LLM querying may therefore recover memorized benchmark structure rather than infer the graph from the provided context. In contrast, CEA restricts the LLM to expert reweighting, mitigating the leakage risk while keeping the final graph grounded in SCD outputs.

\begin{table}[h]
    \centering
    \begin{minipage}{0.495\textwidth}
        \vtop{%
        \vskip 0pt
        \caption{Ablations of CEA. Comparing LLM-guided expert reweighting (w/ LLM) with uniform weights (w/o LLM), and comparing different LLM roles, where the LLM acts either as a meta-referee for expert aggregation or as a direct causal expert.}
        \label{tab_ablation1}
        \begin{center}\resizebox{1.0\textwidth}{!}{
        \begin{tabular}{c|>{\columncolor{shadow}}c>{\columncolor{shadow}}c|cc|>{\columncolor{shadow}}c>{\columncolor{shadow}}c|cc}
        \toprule
            \textbf{LLM Abl.} & \multicolumn{4}{c|}{Weight Assignment} & \multicolumn{4}{c}{LLM Role} \\
            \cmidrule(lr){2-5} \cmidrule(lr){6-9}
            Replace & \multicolumn{2}{c}{\textbf{w/ LLM}} & \multicolumn{2}{c|}{w/o LLM} & \multicolumn{2}{c}{\textbf{Meta-referee}} & \multicolumn{2}{c}{Causal Expert} \\
            \cmidrule(lr){2-3} \cmidrule(lr){4-5} \cmidrule(lr){6-7} \cmidrule(lr){8-9}
            Metric & RNB ($\downarrow$) & AUC ($\uparrow$) & RNB ($\downarrow$) & AUC ($\uparrow$) & RNB ($\downarrow$) & AUC ($\uparrow$) & RNB ($\downarrow$) & AUC ($\uparrow$) \\
            \specialrule{0.75pt}{0.0pt}{1.5pt}
            Alarm & \textbf{0.049} & \textbf{0.952} & 0.147 & 0.865 & \textbf{0.049} & \textbf{0.952} & 0.117 & 0.919 \\
            \specialrule{0.5pt}{1.5pt}{1.5pt}
            Asia & \textbf{0.093} & \textbf{0.918} & 0.281 & 0.732 & 0.093 & 0.918 & \textbf{0.000} & \textbf{1.000} \\
            \specialrule{0.5pt}{1.5pt}{1.5pt}
            Child & \textbf{0.140} & \textbf{0.864} & 0.260 & 0.755 & \textbf{0.140} & \textbf{0.864} & 0.665 & 0.466 \\
            \specialrule{0.5pt}{1.5pt}{1.5pt}
            Insurance & \textbf{0.207} & \textbf{0.793} & 0.300 & 0.683 & \textbf{0.207} & \textbf{0.793} & 0.356 & 0.695 \\
            \specialrule{0.5pt}{1.5pt}{1.5pt}
            Sachs & \textbf{0.169} & \textbf{0.838} & 0.456 & 0.691 & \textbf{0.169} & \textbf{0.838} & 0.441 & 0.533 \\
            \specialrule{0.5pt}{1.5pt}{1.5pt}
            Alzheimer* & \textbf{0.125} & \textbf{0.900} & 0.219 & 0.798 & \textbf{0.125} & \textbf{0.900} & 0.156 & 0.862 \\
            \specialrule{0.5pt}{1.5pt}{1.5pt}
            Arctic-Sea* & \textbf{0.427} & \textbf{0.597} & 0.469 & 0.549 & \textbf{0.427} & 0.597 & 0.448 & \textbf{0.607} \\
            \specialrule{0.5pt}{1.5pt}{1.5pt}
            Sangiovese* & \textbf{0.373} & \textbf{0.652} & 0.412 & 0.612 & \textbf{0.373} & \textbf{0.652} & 0.471 & 0.528 \\
            \specialrule{0.75pt}{1.5pt}{1.5pt}
            \textbf{Performance} & \textbf{100\%} & \textbf{100\%} & $-$41.9\% & $-$12.4\% & \textbf{100\%} & \textbf{100\%} & $-$35.7\% & $-$15.2\% \\
        \bottomrule
        \end{tabular}
        }\end{center}
    }
    \end{minipage}
    \hfill
    \begin{minipage}{0.495\textwidth}
        \vtop{%
        \vskip 0pt
        \caption{Ablations of CEA. Comparing the full LLM prompt with variants that remove meta context (w/o Context), expert opinions (w/o Opinions), or both (w/o Both), measuring their impact on LLM-guided expert reweighting.}
        \label{tab_ablation2}
        \begin{center}\resizebox{1.0\textwidth}{!}{
        \begin{tabular}{c|>{\columncolor{shadow}}c>{\columncolor{shadow}}c|cc|cc|cc}
        \toprule
            \textbf{Prompt Abl.} & \multicolumn{8}{c}{Prompt Content} \\
            \cmidrule(lr){2-9}
            Content & \multicolumn{2}{c}{\textbf{Full}} & \multicolumn{2}{c}{w/o Context} & \multicolumn{2}{c}{w/o Opinions} & \multicolumn{2}{c}{w/o Both} \\
            \cmidrule(lr){2-3} \cmidrule(lr){4-5} \cmidrule(lr){6-7} \cmidrule(lr){8-9}
            Metric & RNB ($\downarrow$) & AUC ($\uparrow$) & RNB ($\downarrow$) & AUC ($\uparrow$) & RNB ($\downarrow$) & AUC ($\uparrow$) & RNB ($\downarrow$) & AUC ($\uparrow$) \\
            \specialrule{0.75pt}{0.0pt}{1.5pt}
            Alarm & \textbf{0.049} & \textbf{0.952} & 0.114 & 0.887 & 0.147 & 0.865 & 0.147 & 0.865 \\
            \specialrule{0.5pt}{1.5pt}{1.5pt}
            Asia & \textbf{0.093} & \textbf{0.918} & 0.344 & 0.657 & 0.281 & 0.732 & 0.281 & 0.732 \\
            \specialrule{0.5pt}{1.5pt}{1.5pt}
            Child & \textbf{0.140} & \textbf{0.864} & 0.255 & 0.759 & 0.260 & 0.755 & 0.260 & 0.755 \\
            \specialrule{0.5pt}{1.5pt}{1.5pt}
            Insurance & \textbf{0.207} & \textbf{0.793} & 0.305 & 0.674 & 0.300 & 0.683 & 0.300 & 0.683 \\
            \specialrule{0.5pt}{1.5pt}{1.5pt}
            Sachs & \textbf{0.169} & \textbf{0.838} & 0.419 & 0.665 & 0.456 & 0.691 & 0.456 & 0.691 \\
            \specialrule{0.5pt}{1.5pt}{1.5pt}
            Alzheimer* & \textbf{0.125} & \textbf{0.900} & 0.219 & 0.798 & 0.219 & 0.798 & 0.219 & 0.798 \\
            \specialrule{0.5pt}{1.5pt}{1.5pt}
            Arctic-Sea* & \textbf{0.427} & \textbf{0.597} & 0.500 & 0.500 & 0.469 & 0.549 & 0.469 & 0.549 \\
            \specialrule{0.5pt}{1.5pt}{1.5pt}
            Sangiovese* & \textbf{0.373} & \textbf{0.652} & 0.412 & 0.601 & 0.412 & 0.612 & 0.412 & 0.612 \\
            \specialrule{0.75pt}{1.5pt}{1.5pt}
            \textbf{Performance} & \textbf{100\%} & \textbf{100\%} & $-$41.7\% & $-$14.8\% & $-$41.9\% & $-$12.4\% & $-$41.9\% & $-$12.4\% \\
        \bottomrule
        \end{tabular}
        }\end{center}
    }
    \end{minipage}
\end{table}

\paragraph{Ablation Study 2.} We further investigate whether the information provided to the LLM is necessary for reliable expert reweighting. Specifically, we ablate two key prompt components: 1) \emph{Meta Context} and 2) \emph{Expert Opinions}. As shown in Table~\ref{tab_ablation2}, removing the meta context substantially weakens CEA, dropping the performance of \textbf{41.7\%} in RNB and \textbf{14.8\%} in AUC. This suggests that variable semantics, dataset characteristics, and relation-type information are important for the LLM to assess expert-data compatibility and produce accurate weights. Note that removing expert opinions also causes clear degradation, decreasing RNB by \textbf{41.9\%} and AUC by \textbf{12.4\%} on average, implying that the LLM needs structured expert-specific evidence to distinguish which experts should be trusted more for each relation. Interestingly, removing Expert Opinions leads to almost the same results as w/o LLM in Table~\ref{tab_ablation1}. This indicates that when the LLM is given only variable and dataset information but no concrete expert arguments, it does not arbitrarily favor particular experts based on names or prior knowledge. Instead, it falls back to conservative uniform weighting. Therefore, both meta context and expert opinions are essential: the former provides the background needed, while the latter supplies the expert-level evidence required for meaningful relation-specific reweighting.

\paragraph{Weight-Performance Alignment.} To further examine whether the LLM reweighting is meaningful, we visualize the distribution of LLM-assigned expert weights and compare them with empirical expert performance on the \textit{Insurance} dataset. As shown in Figure~\ref{fig:weight_alignment}, experts with stronger AUC performance generally receive higher average weights: GES obtains the largest mean weight ($7.63$) and strong AUC, while BOSS, FCI, PC, and GRaSP receive moderate-to-high weights. In contrast, LiNGAM shows the weakest AUC and is clearly down-weighted, with a mean weight of only $2.36$. The heatmap further shows high alignment scores for most experts, suggesting that the LLM does not assign weights arbitrarily but calibrates expert credibility in a way that is consistent with their empirical reliability. This provides additional evidence that CEA's improvement comes from relation-specific expert calibration rather than from naive ensemble averaging.
\begin{figure}[h]
    \centering
    \includegraphics[width=0.81\linewidth]{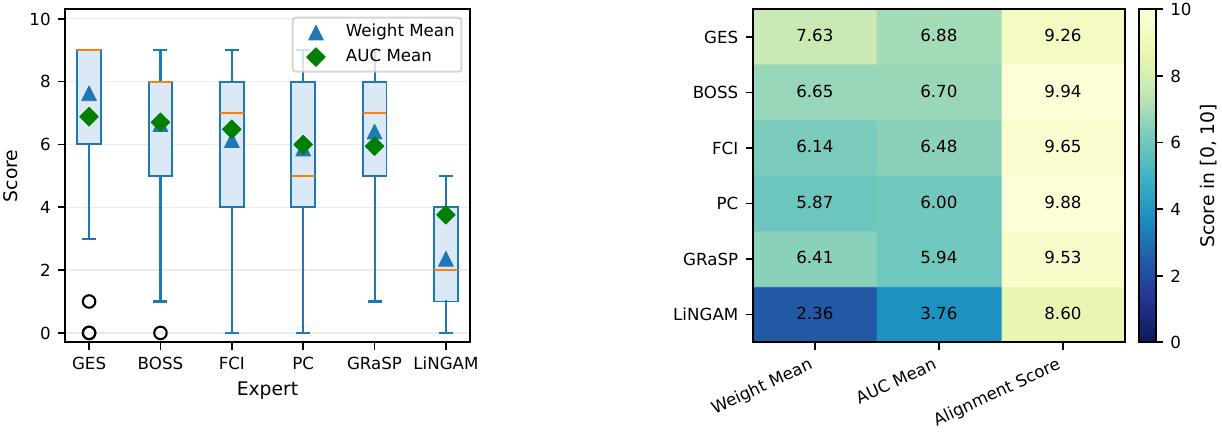}
    \caption{Weight-performance alignment on the Insurance dataset. \textit{Left:} distribution of LLM-assigned expert weights across all queries, with triangles denoting mean weights and diamonds denoting expert mean AUC scores rescaled to $[0,10]$ over four different sample sizes. \textit{Right:} heatmap of mean LLM weight, mean AUC, and alignment score for each expert. The LLM assigns larger weights to stronger experts, indicating that CEA's reweighting is aligned with empirical expert reliability.}
    \label{fig:weight_alignment}
\end{figure}
\subsection{Limitation Analysis}
\label{limitation_analysis}
Despite its strong empirical performance, CEA has several limitations. First, LLM-guided reweighting depends on prompt quality: ambiguous variable names, incomplete feature descriptions, or poorly summarized dataset profiles can bias the LLM’s judgment and lead to bad weight estimation. Second, CEA reduces LLM query cost but still requires running experts over bootstraps, so its runtime depends on expert cost. Third, CEA is limited by expert coverage as the LLM only reweights existing SCD outputs rather than proposing new relations, and relations missed by all experts cannot be recovered. By highlighting these limitations, we aim to encourage further research to refine our method.

\section{Conclusion}
\label{conclusion}
In this paper, we studied how to reconcile conflicting outputs from heterogeneous statistical causal discovery methods and discovered a more reliable role for LLMs in causal discovery beyond direct causal prediction. We proposed \textbf{C}ausal \textbf{E}nsemble \textbf{A}gent (\textbf{CEA}), a hierarchical causal discovery ensemble framework that aggregates relation beliefs through linear opinion pooling and invokes an LLM to reweight experts only for disputed relations. By progressively integrating skeletons, v-structures, and edge orientations, CEA constructs a causal graph while combining the reliability of SCD methods with domain knowledge. Extensive experiments on both synthetic and real-world benchmarks show that CEA achieves the strongest overall performance against a wide range of causal discovery methods, highlighting the effectiveness of positioning LLMs as meta-referees for causal discovery. Future improvements may include: 1) improving context and opinion construction quality, and 2) exploring more adaptive LLM invocation to better trade off performance and time cost.

% \begin{ack}

% \end{ack}

\newpage
\bibliography{neurips_2026}

%%%%%%%%%%%%%%%%%%%%%%%%%%%%%%%%%%%%%%%%%%%%%%%%%%%%%%%%%%%%
\newpage
\appendix

\section{Additional Theoretical Analyses}
\label{app:theory}
We provide concise guarantees for the components of CEA that are directly used in Section~\ref{methodology}: bootstrap-based linear pooling, margin-aware LLM invocation, LLM-guided reweighting, candidate coverage, threshold-based layer consistency, and final DAG validity. The statements below focus on the causally sufficient DAG/CPDAG setting. PAG-valued experts can still be used empirically after applying a deterministic preprocessing map to the relation families below, but the guarantees are stated for the resulting skeleton, v-structure, and edge-orientation relations.

\paragraph{Relation families.}
Fix an arbitrary total order on the variable set $V$; we write $a<b$ only to avoid duplicate unordered relations. Define
\[
\mathcal{R}_{\mathrm{sk}}
\coloneqq\bigl\{\{a,b\}:a,b\in V,\ a<b\bigr\},
\]
\[
\mathcal{R}_{\mathrm{vs}}
\coloneqq\bigl\{(a,b,c):a,b,c\in V\ \text{are distinct},\ a<c\bigr\},
\]
and, for each unordered pair $\{a,b\}$ with $a<b$,
\[
\mathcal{C}_{ab}\coloneqq\bigl\{a\!\to\!b,\ b\!\to\!a,\ a\!-\!b\bigr\},
\qquad
\mathcal{R}_{\mathrm{eg}}\coloneqq\bigcup_{a<b}\mathcal{C}_{ab}.
\]
Here $\mathrm{sk}$, $\mathrm{vs}$, and $\mathrm{eg}$ match the notation in Eq.~\eqref{eq:support_score}. For a graph $G$, $T_{\mathrm{sk}}(G)$ extracts adjacencies, $T_{\mathrm{vs}}(G)$ extracts unshielded colliders $a\!\to\!b\!\leftarrow\!c$, and $T_{\mathrm{eg}}(G)$ extracts, for each represented adjacent pair, one of the three states in $\mathcal{C}_{ab}$ whenever the pair is directed or undirected in $G$. If the pair is absent, no state in $\mathcal{C}_{ab}$ is returned. For threshold-based relations, namely $\kappa\in\{\mathrm{sk},\mathrm{vs}\}$, the LLM-reweighted coefficients are relation-specific as in Eq.~\eqref{eq:llm_weights}. For edge orientation, the comparison among $a\!\to\!b$, $b\!\to\!a$, and $a\!-\!b$ is well-defined when the same pair-level weights are used for all states in $\mathcal{C}_{ab}$. Thus, in $\mathrm{eg}$ layer, the notation $\alpha_i'(r)$ should be read as a pair-level vector $\alpha_i'(a,b)$ shared by all $r\in\mathcal{C}_{ab}$.

\subsection{Concentration of Pooled Support}
\label{app:concentration}
For each expert $e_i$, relation $r\in\mathcal{R}_{\kappa}$, and resampled dataset $X^{(b)}$, define
\[
Z_{i,b}(r)\coloneqq\mathbf{1}\bigl\{r\in T_{\kappa}(\widehat G_i^{(b)})\bigr\},
\qquad
\hat p_i(r)=\frac{1}{B}\sum_{b=1}^B Z_{i,b}(r).
\]
Let $\mathcal{F}_0$ contain all information used to choose a normalized weight vector $\alpha(r)=(\alpha_1(r),\ldots,\alpha_m(r))$, excluding the bootstrap randomness used in the support estimate below. Conditional on $\mathcal{F}_0$, assume that the $B$ resampled datasets and any algorithmic randomness are independent across $b$. For fixed realized weights $\alpha_i(r)\ge0$ with $\sum_i\alpha_i(r)=1$, define
\[
S_b(r)\coloneqq\sum_{i=1}^m\alpha_i(r)Z_{i,b}(r)\in[0,1],
\qquad
\hat p_{\alpha}(r)\coloneqq\frac{1}{B}\sum_{b=1}^B S_b(r),
\]
and
\[
q_{\alpha}(r)\coloneqq\mathbb{E}\bigl[S_b(r)\mid\mathcal{F}_0\bigr].
\]

\begin{proposition}[Concentration of pooled support and selective LLM invocation]
\label{prop:concentration}
Under the conditional independence assumption above, for every $\eta>0$,
\[
\Pr\Bigl(\bigl|\hat p_{\alpha}(r)-q_{\alpha}(r)\bigr|\ge\eta\,\Bigm|\,\mathcal{F}_0\Bigr)
\le 2\exp(-2B\eta^2).
\]
Consequently, for the threshold decision
\[
\hat y_{\alpha}(r)\coloneqq\mathbf{1}\{\hat p_{\alpha}(r)\ge\tau_{\kappa}\},
\qquad
 y_{\alpha}^{\star}(r)\coloneqq\mathbf{1}\{q_{\alpha}(r)\ge\tau_{\kappa}\},
\]
we have
\[
\Pr\Bigl(\hat y_{\alpha}(r)\neq y_{\alpha}^{\star}(r)\,\Bigm|\,\mathcal{F}_0\Bigr)
\le
2\exp\Bigl(-2B\bigl|q_{\alpha}(r)-\tau_{\kappa}\bigr|^2\Bigr).
\]
In particular, if $|q_{\alpha}(r)-\tau_{\kappa}|\ge\epsilon$, then the thresholding error is at most $2e^{-2B\epsilon^2}$.
\end{proposition}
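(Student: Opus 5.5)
The plan is to condition on $\mathcal{F}_0$ throughout, so that the weight vector $\alpha(r)$ is a fixed, deterministic normalized vector. Then $S_1(r),\ldots,S_B(r)$ are conditionally independent, since each $S_b(r)$ is a deterministic function of the $b$-th resampled dataset and its algorithmic randomness, which are independent across $b$ by assumption. Each $S_b(r)$ is a convex combination of indicators, so it lies in $[0,1]$. The $S_b(r)$ share the common conditional mean $q_{\alpha}(r)$. The one point to check is that the resamples are identically distributed given $\mathcal{F}_0$. If we only want the bound as stated, identical distribution is not needed: Hoeffding's inequality applies to independent bounded variables with mean equal to the average of the individual means. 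So I will read $q_{\alpha}(r)$ as that average, which coincides with the common mean whenever the resamples are i.i.d. This is the only place needing care. The main obstacle is ensuring that $\alpha(r)$ is not a function of the same bootstrap draws, and the $\mathcal{F}_0$-measurability assumption is exactly what rules that out.

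Applying Hoeffding's inequality with range $1$ for each of the $B$ terms gives
\[
\Pr\bigl(|\hat p_{\alpha}(r)-q_{\alpha}(r)|\ge\eta\mid\mathcal{F}_0\bigr)\le 2\exp\bigl(-2B^2\eta^2/B\bigr)=2\exp(-2B\eta^2).
\]
This is the first claim.

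For the thresholding claim, set $\epsilon\coloneqq|q_{\alpha}(r)-\tau_{\kappa}|$. If $\epsilon=0$, the bound equals $2$ and is trivial. Otherwise, split on the sign of $q_{\alpha}(r)-\tau_\kappa$.

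\emph{Case $q_{\alpha}(r)\ge\tau_\kappa$.} Here $y^\star_\alpha(r)=1$, and a disagreement means $\hat p_\alpha(r)<\tau_\kappa$. This forces $q_\alpha(r)-\hat p_\alpha(r)>\epsilon$, hence $|\hat p_\alpha(r)-q_\alpha(r)|\ge\epsilon$.

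\emph{Case $q_{\alpha}(r)<\tau_\kappa$.} Here a disagreement means $\hat p_\alpha(r)\ge\tau_\kappa$, so $\hat p_\alpha(r)-q_\alpha(r)\ge\epsilon$.

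In both cases the error event is contained in $\{|\hat p_\alpha(r)-q_\alpha(r)|\ge\epsilon\}$. The first part with $\eta=\epsilon$ then yields $2\exp(-2B|q_\alpha(r)-\tau_\kappa|^2)$. A one-sided Hoeffding bound would even remove the factor $2$, but this is not needed.

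The final statement follows by monotonicity. The map $x\mapsto 2e^{-2Bx^2}$ is decreasing for $x\ge0$, so $|q_\alpha(r)-\tau_\kappa|\ge\epsilon$ implies an error probability of at most $2e^{-2B\epsilon^2}$.
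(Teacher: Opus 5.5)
Your proposal is correct and follows essentially the same route as the paper. You apply Hoeffding's inequality conditionally on $\mathcal{F}_0$ to the independent $[0,1]$-valued $S_b(r)$, then include the thresholding-error event in $\{|\hat p_{\alpha}(r)-q_{\alpha}(r)|\ge|q_{\alpha}(r)-\tau_{\kappa}|\}$, with your sign-based case split and separate handling of $|q_{\alpha}(r)-\tau_{\kappa}|=0$ (where the first bound, stated only for $\eta>0$, does not directly apply) making explicit what the paper states in one line.
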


\begin{proof}
Conditional on $\mathcal{F}_0$, the weights are fixed and $S_1(r),\ldots,S_B(r)$ are independent random variables supported on $[0,1]$ with mean $q_{\alpha}(r)$. Hoeffding's inequality \cite{hoeffding1963probability} gives the concentration bound. A thresholding error can occur only when $\hat p_{\alpha}(r)$ crosses $\tau_{\kappa}$ while $q_{\alpha}(r)$ remains on the other side, so
\[
\{\hat y_{\alpha}(r)\neq y_{\alpha}^{\star}(r)\}
\subseteq
\Bigl\{\bigl|\hat p_{\alpha}(r)-q_{\alpha}(r)\bigr|\ge |q_{\alpha}(r)-\tau_{\kappa}|\Bigr\}.
\]
Applying the first bound with $\eta=|q_{\alpha}(r)-\tau_{\kappa}|$ proves the thresholding bound.
\end{proof}

\paragraph{Intuition.}
Once the expert weights are fixed, the pooled support is an average of bounded bootstrap quantities, so it concentrates as $B$ grows. Relations whose mean support is far from the threshold are therefore already stable; the relations most worth sending to the LLM are exactly the near-boundary cases selected by Eq.~\eqref{eq:disputed_set}.

\paragraph{Remark on LLM-reweighted scores.}
The proposition applies directly to the base score in Eq.~\eqref{eq:linear_pool}. It also applies to LLM-reweighted scores when the LLM weights are computed from information independent of the bootstrap randomness used in $\hat p_{\alpha}(r)$ (e.g., from fixed metadata or an independent pilot split). If the same bootstrap indicators are used both to choose the LLM weights and to estimate the refined score, the concentration statement should be read as the guarantee for the pre-reweighting score that triggers LLM invocation, while the refined score is an adaptive reweighted estimate.

\subsection{Grounding and Candidate Coverage}
\label{app:grounding_coverage}
\begin{lemma}[Convex-hull property of linear reweighting]
\label{lem:convex_hull}
For any threshold-layer relation $r\in\mathcal{R}_{\kappa}$ with $\kappa\in\{\mathrm{sk},\mathrm{vs}\}$, the final pooled score $\tilde p(r)$ in Eq.~\eqref{eq:final_pool} satisfies
\[
\min_{1\le i\le m}\hat p_i(r)
\le
\tilde p(r)
\le
\max_{1\le i\le m}\hat p_i(r).
\]
For edge orientation, the same statement holds for each state $s\in\mathcal{C}_{ab}$ when the shared pair-level weights $\alpha_i'(a,b)$ are used.
\end{lemma}

\begin{proof}
Both the base score $\hat p(r)=\sum_i\alpha_i\hat p_i(r)$ and the reweighted score $\hat p'(r)=\sum_i\alpha_i'(r)\hat p_i(r)$ are convex combinations of $\hat p_1(r),\ldots,\hat p_m(r)$, because the weights are nonnegative and sum to one. Any convex combination lies between the minimum and maximum of its entries. The edge-orientation statement follows by the same argument state-wise for $s\in\mathcal{C}_{ab}$.
\end{proof}

\paragraph{Intuition.}
The LLM can change how much CEA trusts each expert, but it cannot create a support score outside the range supplied by the experts. Thus the LLM acts as a meta-referee for expert credibility rather than as a direct generator of causal relations.

\begin{proposition}[Candidate-coverage ceiling]
\label{prop:candidate_coverage}
Let
\[
\mathcal{A}_{\mathrm{sk}}\coloneqq\bigcup_{i=1}^m \widehat E_{i,\mathrm{sk}}
\]
be the skeleton candidate set in Eq.~\eqref{eq:skeleton_ensemble}. Given the consensus skeleton $\widehat G_{\mathrm{sk}}=(V,\widehat E_{\mathrm{sk}})$, let
\[
\mathcal{A}_{\mathrm{vs}}(\widehat G_{\mathrm{sk}})
\coloneqq
\Bigl\{(a,b,c)\in\bigcup_{i=1}^m\widehat{\mathcal V}_i:
\{a,b\},\{b,c\}\in\widehat E_{\mathrm{sk}},\ \{a,c\}\notin\widehat E_{\mathrm{sk}}\Bigr\}
\]
be the v-structure candidates that satisfy the skeleton-validity condition in Section~\ref{hierarchical_ensemble}. Then
\[
\widehat E_{\mathrm{sk}}\subseteq\mathcal{A}_{\mathrm{sk}},
\qquad
\widehat{\mathcal V}\subseteq\mathcal{A}_{\mathrm{vs}}(\widehat G_{\mathrm{sk}}).
\]
Moreover, if $\{a,b\}\notin\widehat E_{\mathrm{sk}}$, no later edge-orientation step can introduce either $a\!\to\!b$ or $b\!\to\!a$ into the final graph.
\end{proposition}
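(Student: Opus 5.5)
The argument is a direct unwinding of definitions, taken in the order the hierarchical ensemble constructs the graph. We treat the three claims separately.

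\textbf{Skeleton inclusion.} By Eq.~\eqref{eq:skeleton_ensemble}, $\widehat E_{\mathrm{sk}}$ is defined as a subset of $\bigcup_i\widehat E_{i,\mathrm{sk}}$ selected by the filter $\tilde p(r)\ge\tau_{\mathrm{sk}}$. Filtering can only remove elements, so $\widehat E_{\mathrm{sk}}\subseteq\mathcal A_{\mathrm{sk}}$. Lemma~\ref{lem:convex_hull} reinforces this: a relation supported by no expert has $\hat p_i(r)=0$ for all $i$, hence $\tilde p(r)=0<\tau_{\mathrm{sk}}$, so LLM reweighting cannot lift it. This observation is not strictly needed, however, since only union members are ever scored.

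\textbf{V-structure inclusion.} By Eq.~\eqref{eq:vstructure_ensemble}, every $r\in\widehat{\mathcal V}$ lies in $\bigcup_i\widehat{\mathcal V}_i$. The construction also admits only \emph{valid} candidates: $\{a,b\},\{b,c\}\in\widehat E_{\mathrm{sk}}$ and $a,c$ nonadjacent in $\widehat G_{\mathrm{sk}}$. We read Eq.~\eqref{eq:vstructure_ensemble} as ranging over valid candidates, as the text preceding it stipulates. Under that reading, each selected triple satisfies exactly the defining conditions of $\mathcal A_{\mathrm{vs}}(\widehat G_{\mathrm{sk}})$, and the inclusion follows. The later greedy insertion with cycle rollback keeps only a subset of $\widehat{\mathcal V}$, so it cannot leave this set either.

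\textbf{No reintroduction of excluded adjacencies.} This part needs an invariant: every graph produced after the skeleton layer has adjacency set contained in $\widehat E_{\mathrm{sk}}$. We verify it stage by stage, by induction over the sequence of operations.
\begin{itemize}
\item V-structure insertion only orients edges $\{a,b\},\{b,c\}$, which are already in $\widehat E_{\mathrm{sk}}$ by validity. Rollbacks restore a previous graph.
\item The edge-orientation layer acts only on undirected pairs $\{a,b\}$ of $\widehat G_{\mathrm{vs}}$. Choosing among $\mathcal C_{ab}$ changes the mark on an existing adjacency and never creates one.
\item Meek rules R1--R4 are orientation rules: each rule's conclusion orients an edge that is already present and undirected, and none adds an adjacency.
\item The last-resort LLM completion only directs remaining undirected edges of the current graph.
\end{itemize}
Hence the adjacency set is preserved at every step. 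Consequently, if $\{a,b\}\notin\widehat E_{\mathrm{sk}}$, the final graph contains neither $a\to b$ nor $b\to a$.

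The main obstacle is making the third claim rigorous. The paper describes the orientation stage procedurally, so we must fix a precise formalization in which each operation (orientation, Meek closure, LLM completion) is a map that preserves the skeleton. We would state this as an explicit invariant and check it against each rule's form, citing the standard fact that Meek's rules only orient edges. A secondary subtlety is the reading of Eq.~\eqref{eq:vstructure_ensemble} as restricted to valid candidates; we would make that explicit.
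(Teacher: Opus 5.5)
Your proposal is correct and follows the same route as the paper's proof. Both inclusions hold because Eqs.~\eqref{eq:skeleton_ensemble}--\eqref{eq:vstructure_ensemble}, together with the validity filter, only threshold expert-proposed candidates. The last claim holds because every later step (v-structure insertion, orientation, Meek closure, completion) only changes marks on adjacencies already in $\widehat E_{\mathrm{sk}}$; you state this as an explicit invariant, where the paper gives one line plus its remark on Meek rules.

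One small caveat: your aside via Lemma~\ref{lem:convex_hull} is not valid as stated. $\hat p_i(r)$ is a bootstrap frequency and can be positive for an edge absent from the full-data graph $\widehat G_i$. As you say, though, the skeleton inclusion does not need that aside.
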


\begin{proof}
Eqs.~\eqref{eq:skeleton_ensemble} and \eqref{eq:vstructure_ensemble}, together with the stated v-structure validity filter, only threshold candidates already proposed by the expert library. Later orientation steps operate on the remaining undirected edges of $\widehat G_{\mathrm{vs}}$, whose adjacencies come from $\widehat E_{\mathrm{sk}}$. Therefore an adjacency absent from $\widehat E_{\mathrm{sk}}$ cannot be introduced later.
\end{proof}

\paragraph{Intuition.}
CEA is an ensemble-and-calibration method, not a relation generator. If a true relation is never proposed by any expert, LLM reweighting has no candidate support on which to act. This is why full-recovery guarantees require a candidate-coverage assumption.

\paragraph{Remark on Meek-rule closure.}
Meek-rule closure applied during edge orientation insertion does not introduce \emph{new adjacencies}: it only propagates compelled orientations along edges already present in the current PDAG~\cite{meek1995causal}. Combined with the cycle and conflict roll-back in Section~\ref{hierarchical_ensemble}, this ensures that any adjacency excluded from $\widehat E_{\mathrm{sk}}$ cannot be reintroduced by later orientation steps. The selected v-structures are introduced only during the v-structure layer and remain within $\widehat{\mathcal V}\subseteq\mathcal{A}_{\mathrm{vs}}(\widehat G_{\mathrm{sk}})$.

\subsection{Consistency of Threshold-based Layers}
\label{app:consistency}
Fix $\kappa\in\{\mathrm{sk},\mathrm{vs}\}$. Let $Y_{\kappa}(r)\in\{0,1\}$ denote whether relation $r\in\mathcal{R}_{\kappa}$ is present in the target graph layer. Let $\mathcal{A}_{\kappa}\subseteq\mathcal{R}_{\kappa}$ be the candidate set evaluated by the corresponding CEA layer. For $\kappa=\mathrm{vs}$, this candidate set is understood after the skeleton-validity filtering in Proposition~\ref{prop:candidate_coverage}.

For sample size $n$, write the final threshold-layer score as
\[
\tilde p^{(n)}(r)=\sum_{i=1}^m \tilde\alpha_i^{(n)}(r)\hat p_i^{(n)}(r),
\]
where $\tilde\alpha_i^{(n)}(r)=\alpha_i$ for non-disputed relations and $\tilde\alpha_i^{(n)}(r)=\alpha_i'(r)$ for disputed relations.

\begin{theorem}[Consistency under candidate coverage and score separation]
\label{thm:consistency}
Assume that $m$ and $|V|$ are fixed. Suppose candidate coverage holds:
\[
\{r\in\mathcal{R}_{\kappa}:Y_{\kappa}(r)=1\}\subseteq\mathcal{A}_{\kappa}.
\]
For every $r\in\mathcal{A}_{\kappa}$ and expert $i\in\{1,\ldots,m\}$, assume
\[
\hat p_i^{(n)}(r)\xrightarrow{\;p\;}\theta_i(r)\in[0,1],
\qquad
\tilde\alpha_i^{(n)}(r)\xrightarrow{\;p\;}\tilde\alpha_i^{\star}(r),
\]
where $\tilde\alpha_i^{\star}(r)\ge0$ and $\sum_{i=1}^m\tilde\alpha_i^{\star}(r)=1$. Define
\[
\tilde\theta(r)\coloneqq\sum_{i=1}^m\tilde\alpha_i^{\star}(r)\theta_i(r).
\]
If, for every $r\in\mathcal{A}_{\kappa}$, there exists $\gamma_r>0$ such that
\begin{equation}
\bigl(\tilde\theta(r)-\tau_{\kappa}\bigr)\bigl(2Y_{\kappa}(r)-1\bigr)\ge\gamma_r,
\label{eq:app_signal_margin}
\end{equation}
then the CEA decision
\[
\widehat Y_{\kappa}^{(n)}(r)\coloneqq
\begin{cases}
\mathbf{1}\{\tilde p^{(n)}(r)\ge\tau_{\kappa}\}, & r\in\mathcal{A}_{\kappa},\\
0, & r\notin\mathcal{A}_{\kappa},
\end{cases}
\]
recovers the whole threshold-based layer consistently:
\[
\Pr\Bigl(\exists r\in\mathcal{R}_{\kappa}:\widehat Y_{\kappa}^{(n)}(r)\neq Y_{\kappa}(r)\Bigr)\to0.
\]
\end{theorem}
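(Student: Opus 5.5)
The plan is a standard ``continuous mapping plus finite union bound'' argument. First split $\mathcal{R}_{\kappa}$ into $\mathcal{A}_{\kappa}$ and its complement. For $r\notin\mathcal{A}_{\kappa}$ the decision is deterministically $\widehat Y_{\kappa}^{(n)}(r)=0$. Candidate coverage gives $Y_{\kappa}(r)=0$ for every such $r$, since all true relations lie in $\mathcal{A}_{\kappa}$. So no error can occur outside the candidate set, and the event in the statement reduces to $\{\exists r\in\mathcal{A}_{\kappa}:\widehat Y_{\kappa}^{(n)}(r)\neq Y_{\kappa}(r)\}$.

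Next, fix $r\in\mathcal{A}_{\kappa}$ and show $\tilde p^{(n)}(r)\xrightarrow{p}\tilde\theta(r)$. Since $m$ is fixed, convergence in probability of each coordinate gives joint convergence of the vector $(\tilde\alpha^{(n)}(r),\hat p^{(n)}(r))$ to $(\tilde\alpha^{\star}(r),\theta(r))$. The map $(\alpha,p)\mapsto\sum_i\alpha_ip_i$ is continuous, so the continuous mapping theorem applies. Concretely, one can bound
\[
|\tilde p^{(n)}(r)-\tilde\theta(r)|\le\sum_{i}|\tilde\alpha_i^{(n)}(r)-\tilde\alpha_i^{\star}(r)|+\sum_i|\hat p_i^{(n)}(r)-\theta_i(r)|,
\]
using $\hat p_i^{(n)}\in[0,1]$ and $\tilde\alpha_i^{\star}\le1$. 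Each term tends to zero in probability.

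Then use the margin \eqref{eq:app_signal_margin}. If $Y_{\kappa}(r)=1$, it gives $\tilde\theta(r)\ge\tau_{\kappa}+\gamma_r$, so an error $\tilde p^{(n)}(r)<\tau_{\kappa}$ forces $|\tilde p^{(n)}(r)-\tilde\theta(r)|>\gamma_r$. If $Y_{\kappa}(r)=0$, it gives $\tilde\theta(r)\le\tau_{\kappa}-\gamma_r$, so an error $\tilde p^{(n)}(r)\ge\tau_{\kappa}$ forces $|\tilde p^{(n)}(r)-\tilde\theta(r)|\ge\gamma_r$. In both cases $\Pr(\widehat Y_{\kappa}^{(n)}(r)\neq Y_{\kappa}(r))\le\Pr(|\tilde p^{(n)}(r)-\tilde\theta(r)|\ge\gamma_r)\to0$.

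Finally, apply a union bound over $r\in\mathcal{A}_{\kappa}$. This set has cardinality at most $|\mathcal{R}_{\kappa}|$, which is at most $\binom{|V|}{2}$ for $\mathrm{sk}$ and $O(|V|^3)$ for $\mathrm{vs}$, a finite number independent of $n$ because $|V|$ is fixed. The sum of finitely many terms each tending to zero tends to zero, which concludes the proof.

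The main obstacle is the $\mathrm{vs}$ layer, where $\mathcal{A}_{\mathrm{vs}}$ is defined after skeleton-validity filtering and so depends on the random $\widehat G_{\mathrm{sk}}$. In addition, whether $r$ is disputed, and hence which weights $\tilde\alpha^{(n)}(r)$ are used, is itself random. The assumptions already absorb the second issue, since convergence is imposed directly on $\tilde\alpha^{(n)}$. For the first, I would read $\mathcal{A}_{\kappa}$ as fixed (as the statement does) and remark that, combined with the $\mathrm{sk}$ result, the filtered set equals its population counterpart with probability tending to one. Intersecting with that event costs only an $o(1)$ term.
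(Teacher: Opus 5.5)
Your proposal is correct and follows the paper's proof essentially step for step. The shared steps are:
\begin{itemize}
\item pointwise convergence $\tilde p^{(n)}(r)\xrightarrow{\;p\;}\tilde\theta(r)$ for each fixed candidate, where the paper appeals to boundedness for the products and you give an explicit triangle-inequality bound;
\item the margin \eqref{eq:app_signal_margin} to control both error types;
\item candidate coverage to make every non-candidate a true negative;
\item a union bound over the finite set $\mathcal{R}_{\kappa}$.
\end{itemize}
Your closing remark on the randomness of $\mathcal{A}_{\mathrm{vs}}$ goes slightly beyond the paper, which, like your main argument, simply treats the candidate set as given.
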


\begin{proof}
For each fixed $r\in\mathcal{A}_{\kappa}$, the boundedness $0\le\hat p_i^{(n)}(r)\le1$ and the assumed convergence of $\hat p_i^{(n)}(r)$ and $\tilde\alpha_i^{(n)}(r)$ imply
\[
\tilde\alpha_i^{(n)}(r)\hat p_i^{(n)}(r)
\xrightarrow{\;p\;}
\tilde\alpha_i^{\star}(r)\theta_i(r).
\]
Since $m$ is fixed, summing over experts gives $\tilde p^{(n)}(r)\xrightarrow{p}\tilde\theta(r)$. If $Y_{\kappa}(r)=1$, Eq.~\eqref{eq:app_signal_margin} gives $\tilde\theta(r)\ge\tau_{\kappa}+\gamma_r$, so
\[
\Pr\bigl(\widehat Y_{\kappa}^{(n)}(r)=0\bigr)
\le
\Pr\bigl(|\tilde p^{(n)}(r)-\tilde\theta(r)|\ge\gamma_r\bigr)\to0.
\]
If $Y_{\kappa}(r)=0$, then $\tilde\theta(r)\le\tau_{\kappa}-\gamma_r$, and the false-positive probability is bounded in the same way. Thus each candidate relation is classified correctly with probability tending to one. Relations outside $\mathcal{A}_{\kappa}$ are always rejected and are true negatives by candidate coverage. Because $\mathcal{R}_{\kappa}$ is finite, a union bound proves the layer-wise claim.
\end{proof}

\paragraph{Intuition.}
The theorem separates two requirements. First, the expert library must cover the true positive relations. Second, after pooling and optional LLM reweighting, the limiting score must fall on the correct side of the threshold with a nonzero margin. Under these two conditions, sampling noise eventually becomes too small to flip the threshold decision.

\paragraph{Remark on local orientation consistency.}
The edge-orientation layer uses an argmax rule rather than a threshold rule. For a remaining pair $\{a,b\}$, suppose the three final state scores $\tilde p_{ab}^{(n)}(s)$, $s\in\mathcal{C}_{ab}$, converge in probability to limits $\mu_{ab}(s)$. If a directed state $s^{\star}\in\{a\!\to\!b,b\!\to\!a\}$ has a strict limiting gap,
\[
\mu_{ab}(s^{\star})
>
\max_{s\in\mathcal{C}_{ab}\setminus\{s^{\star}\}}\mu_{ab}(s),
\]
then the local strict-maximum rule selects $s^{\star}$ with probability tending to one. This is the same convergence argument as above: once all three estimated scores are close to their limits, the state with a strict limiting maximum remains the empirical maximum. This statement is local; the final accepted orientation can still be affected by later conflict checks, directed-cycle checks, and Meek-rule closure.

\subsection{Validity of the Final DAG}
\label{app:graph_validity}

\begin{proposition}[DAG validity under safe orientation and completion]
\label{prop:final_dag_validity}
Let $P_0\coloneqq\widehat G_{\mathrm{vs}}$ be the partially directed graph produced after the skeleton and v-structure layers, and assume that the directed part of $P_0$ is acyclic. Suppose each subsequent tentative orientation, together with the Meek-rule closure applied after it \cite{meek1995causal}, is accepted only if the directed part of the resulting partially directed graph remains acyclic and does not contradict an already oriented edge. If all remaining undirected edges are finally oriented in a way that preserves acyclicity, then the output $\widehat G$ is a DAG. Such a completion always exists: take any topological order of the current directed part and orient every remaining undirected edge from the earlier variable to the later variable in that order.
\end{proposition}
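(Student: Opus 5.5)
The argument is an induction over the sequence of accepted operations, with the invariant ``the directed part of the current partially directed graph is acyclic and every edge carries at most one orientation.'' The base case is the hypothesis that the directed part of $P_0=\widehat G_{\mathrm{vs}}$ is acyclic. For the inductive step, consider a tentative orientation of an undirected pair $\{a,b\}$ followed by Meek-rule closure. The acceptance rule keeps the resulting graph only if its directed part is acyclic and no already oriented edge is contradicted; otherwise CEA rolls back to the previous graph, which satisfies the invariant by the inductive hypothesis. In either case the invariant is preserved. By the remark on Meek-rule closure (and Proposition~\ref{prop:candidate_coverage}), the closure and the orientation steps never add adjacencies, so every graph in the sequence has the skeleton $\widehat E_{\mathrm{sk}}$.

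Next I would handle the final completion. Let $P$ be the graph when the ordered insertion phase ends; its directed part $D$ is an acyclic digraph on $V$. Assume every remaining undirected edge is oriented in an acyclicity-preserving way. Then the output $\widehat G$ has no undirected edges, and each edge has a single orientation by the invariant. It is acyclic by assumption, so it is a DAG.

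For existence, which is the only step with real content, I would take a topological order $\sigma$ of $D$; one exists because $D$ is acyclic. Orient each undirected $\{u,v\}$ as $u\to v$ iff $\sigma(u)<\sigma(v)$. After this, every edge of the completed graph, old or new, points forward in $\sigma$, so any directed cycle would need $\sigma$ to strictly increase around a closed walk, which is impossible. The completed graph is therefore a DAG, it agrees with every existing orientation, and it preserves acyclicity, so the hypothesis of the proposition is attainable. The same argument shows CEA's last-resort LLM step can always fall back to this completion whenever the LLM's proposed directions would create a cycle.

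The main obstacle is bookkeeping rather than mathematics. One needs to make sure that the rollback semantics really return to a graph satisfying the invariant, and that Meek closure applied to a graph whose directed part is acyclic cannot silently assign two orientations to one edge. The acceptance test excludes both: contradictions are rejected explicitly, and cycles are checked on the full post-closure graph. So I would state the invariant on the post-closure graph rather than attempt to prove that Meek rules themselves preserve acyclicity, since that fact is not needed.
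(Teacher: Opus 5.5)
Your proof is correct and follows essentially the same route as the paper: an induction over accepted post-closure orientation steps keeps the directed part acyclic, and orienting the remaining undirected edges forward along a topological order of the current directed part makes every edge point forward, so no directed cycle can exist. The extra bookkeeping on rollbacks and single orientations per edge only makes explicit what the paper's short induction leaves implicit.
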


\begin{proof}
The directed part of $P_0$ is acyclic by assumption. Each accepted orientation step is required to preserve acyclicity after Meek-rule closure, so induction over accepted steps shows that every intermediate directed part is acyclic. For the final completion, the current directed part has a topological order. If every remaining undirected edge is oriented from the earlier endpoint to the later endpoint in this order, then every directed edge in the completed graph points forward in the same order. A directed cycle would require at least one edge to point backward, which is impossible. Hence the completed graph is fully directed and acyclic, i.e., a DAG.
\end{proof}

\paragraph{Intuition.}
This is an algorithmic safety guarantee. Local evidence determines which directions are attempted first, while conflict and cycle checks prevent invalid orientations. If evidence leaves some edges unresolved, a topological-order completion can always orient them without introducing a directed cycle.

\section{Reproducibility of CEA}
\label{app:reproducibility}
We provide implementation-oriented details of CEA to facilitate reproducibility. We first present the full algorithmic procedure, and then describe the source code availability.

\subsection{Pseudocode Details}
\label{app:pseudocode}
We provide the pseudocode for CEA in Algorithm~\ref{alg:cea}, which illustrates the complete coarse-to-fine ensemble procedure over skeletons, v-structures, and edge orientations. Note that the pseudocode follows the notation in Section~\ref{methodology}: $\hat p_i(r)$ denotes the bootstrap support score of expert $e_i$, $\hat p(r)$ denotes the uniformly pooled score, and $\tilde p(r)$ denotes the final score after optional LLM-guided reweighting. For edge orientation, the LLM produces pair-level expert weights for an unordered pair $\{a,b\}$, which are shared across the candidate states in $\mathcal{C}_{ab}=\{a\!\to\!b,b\!\to\!a,a\!-\!b\}$.

\subsection{Source Code Release}
\label{app:source_code}
% We provide the full source code in the \texttt{supplementary material}, including the implementation of CEA, configuration files, prompt templates, and scripts for reproducing the experimental results.
We will publicly release the code repository once the paper is accepted.
\section{Experimental Details}
\label{app:experimental_details}
\subsection{Dataset Details}
\label{app:dataset_details}
We evaluate CEA on eight causal discovery benchmarks, covering both synthetic and real-world scenarios across various domains, such as healthcare, epidemiology, biology, agriculture, and disease. Specifically, we use five synthetic datasets: \textbf{Alarm}, \textbf{Asia}, \textbf{Child}, \textbf{Insurance}, and \textbf{Sachs} \cite{lauritzen1990local}, and three real-world datasets: \textbf{Alzheimer} \cite{abdulaal2024causal}, \textbf{Arctic Sea} \cite{huang2021benchmarking}, and \textbf{Sangiovese} \cite{kıcıman2024causal}. For the synthetic benchmarks, their ground-truth causal graphs and conditional probability tables are obtained from the \texttt{bnlearn} repository, and observational datasets are generated by sampling the corresponding network distributions with sample sizes $n\in\{500,1000,5000,10000\}$ \cite{scutari2021bayesian}. For real-world datasets, they are used as fixed-size real-world or domain-derived benchmarks with ground-truth graphs provided by the corresponding benchmark sources. Table~\ref{tab:dataset_details} summarizes the datasets used in our experiments.
\begin{table}[h]
    \captionsetup{width=\linewidth}
    \caption{Dataset details. The number of variables (\#Variables) refers to the variable set used in our experiments. Sample size refers to the number of data samples used for making causal discovery. Synthetic datasets are generated from benchmark Bayesian networks with four sample sizes of $n \in \{500,1000,5000,10000\}$, while real-world datasets use fixed sample sizes and benchmark-provided ground-truth graphs. Type indicates whether the dataset is synthetic or real-world.}
    \label{tab:dataset_details}
    \centering
    \resizebox{0.915\textwidth}{!}{
    \begin{tabular}{c|c|c|c|c|c}
        \toprule
        Dataset & \#Variables & Sample Size & Type & Graph Source & Domain \\
        \toprule
        Alarm & 37 & $\{500,1000,5000,10000\}$ & Synthetic & \texttt{bnlearn} & Healthcare \\
        \midrule
        Asia & 8 & $\{500,1000,5000,10000\}$ & Synthetic & \texttt{bnlearn} & Epidemiology \\
        \midrule
        Child & 20 & $\{500,1000,5000,10000\}$ & Synthetic & \texttt{bnlearn} & Child \\
        \midrule
        Insurance & 27 & $\{500,1000,5000,10000\}$ & Synthetic & \texttt{bnlearn} & Insurance \\
        \midrule
        Sachs & 11 & $\{500,1000,5000,10000\}$ & Synthetic & \texttt{bnlearn} & Biology \\
        \midrule
        Alzheimer & 9 & 1000 & Real-world & \cite{abdulaal2024causal} & Disease \\
        \midrule
        Arctic-Sea & 12 & 468 & Real-world & \cite{huang2021benchmarking} & Climate \\
        \midrule
        Sangiovese & 14 & 700 & Real-world & \cite{kıcıman2024causal} & Agriculture \\
        \bottomrule
    \end{tabular}
    }
\end{table}
\paragraph{Alarm.} Alarm is a medical monitoring Bayesian network designed for reasoning about patient physiology, anesthesia, ventilation, and clinical monitoring variables~\cite{lauritzen1990local,scutari2021bayesian}. It includes respiratory, cardiovascular, ventilator, and measurement-related variables, making it a medium-scale healthcare benchmark with multiple interacting physiological subsystems.

\paragraph{Asia.}
Asia is a small medical diagnosis (or epidemiology) Bayesian network involving travel history, tuberculosis, smoking, lung cancer, bronchitis, X-ray results, and dyspnoea~\cite{lauritzen1990local,scutari2021bayesian}. It is a canonical benchmark for testing disease, symptom, and diagnostic-test relations.

\paragraph{Child.} Child is a neonatal medicine Bayesian network describing congenital heart disease and related clinical observations~\cite{lauritzen1990local,scutari2021bayesian}. It contains variables related to birth asphyxia, disease category, cardiac mixing, duct flow, lung flow, hypoxia, chest X-ray findings, grunting, and clinical reports, providing a moderately sized clinical benchmark.

\paragraph{Insurance.} Insurance is an automobile insurance Bayesian network that models driver, vehicle, accident, and claim-related variables~\cite{lauritzen1990local,scutari2021bayesian}. It includes factors such as age, socioeconomic status, risk aversion, driving history, vehicle properties, accident severity, theft, medical cost, and property cost, representing a risk-modelling domain.

\paragraph{Sachs.} Sachs is a biological signalling benchmark describing interactions among protein and phospholipid signalling readouts~\cite{lauritzen1990local,scutari2021bayesian}. The variables include Akt, Erk, Jnk, Mek, P38, PIP2, PIP3, PKA, PKC, Plcg, and Raf. In our experiments, we use the Bayesian-network version with conditional probability tables as a synthetic benchmark for cellular signalling structure discovery.

\paragraph{Alzheimer.} Alzheimer is a biomedical causal discovery benchmark for Alzheimer's disease, constructed from demographic, genetic, neuroimaging, amyloid, tau, and cognitive variables~\cite{abdulaal2024causal}. The processed variable set used in our experiments includes APOE4, Sex, Age, Education, AV45, P-tau, Brain Volume, Ventricular Volume, and MOCA Score.

\paragraph{Arctic-Sea.} Arctic-Sea is a climate-science benchmark describing interactions between Arctic sea ice and atmospheric variables~\cite{huang2021benchmarking}. It contains variables related to surface heat flux, shortwave and longwave radiation, sea-level pressure, precipitation, humidity, near-surface winds, sea ice, cloud cover, cloud water, and geopotential height. This dataset is challenging because its ground-truth graph contains many bidirectional relations and feedback cycles.

\paragraph{Sangiovese.} Sangiovese is an agricultural benchmark based on a conditional linear Gaussian network for modelling Tuscan Sangiovese grape quality~\cite{kıcıman2024causal}. The processed variable set used in our experiments includes vine growth variables, vegetation indices, chlorophyll measurements, grape weight, pruning wood weight, acidity, potassium, sugar content, pH, anthocyanins, and polyphenols.
\newpage
\begin{algorithm}[H]
    \caption{Procedure of the CEA Framework}
    \label{alg:cea}
    \begin{algorithmic}[1]
        \STATE {\bfseries Input:} data $X$, experts $\{e_i\}_{i=1}^{m}$, thresholds $\tau_{\mathrm{sk}},\tau_{\mathrm{vs}}$, margin coef. $\delta$, bootstrap count $B$
        \STATE {\bfseries Output:} final DAG estimate $\widehat{G}$
        \STATE Run each expert $e_i$ on $B$ resampled datasets and compute support scores $\hat p_i(r)$ for all $r$
        \STATE Initialize $\alpha_i\leftarrow 1/m$ for all $i$

        \vspace{0.25em}
        \STATE {\bfseries // Step 1: Skeleton Ensemble ($\kappa=\mathrm{sk}$)}
        \STATE Initialize $\widehat{E}_{\mathrm{sk}}\leftarrow\emptyset$ and extract skeleton edge set $\widehat{E}_{i,\mathrm{sk}}$ from each expert
        \FOR{each candidate skeleton relation $r=\{a,b\}\in\bigcup\nolimits_{i=1}^{m}\widehat{E}_{i,\mathrm{sk}}$}
            \STATE $\hat p(r)\leftarrow\sum_i\alpha_i\hat p_i(r)$
            \IF{$|\hat p(r)-\tau_{\mathrm{sk}}|\le \delta\,\tau_{\mathrm{sk}}$}
                \STATE Query LLM $\to$ $w_i(r)$; compute $\alpha_i'(r)$ via Eq.~\eqref{eq:llm_weights}; $\tilde p(r)\leftarrow\hat p'(r)$
            \ELSE
                \STATE $\tilde p(r) \leftarrow \hat p(r)$
            \ENDIF
            \IF{$\tilde p(r)\ge\tau_{\mathrm{sk}}$}
                \STATE Add $r$ to $\widehat{E}_{\mathrm{sk}}$
            \ENDIF
        \ENDFOR
        \STATE Set $\widehat{G}_{\mathrm{sk}}\leftarrow(V,\widehat{E}_{\mathrm{sk}})$

        \vspace{0.25em}
        \STATE {\bfseries // Step 2: V-structure ensemble ($\kappa=\mathrm{vs}$)}
        \STATE Initialize $\widehat{\mathcal{V}}\leftarrow\emptyset$ and extract valid v-structures $\widehat{\mathcal{V}}_i$ under $\widehat{G}_{\mathrm{sk}}$ from each expert $e_i$
        \FOR{each candidate v-structure $r=(a,b,c)\in\bigcup\nolimits_{i=1}^{m}\widehat{\mathcal{V}}_i$}
            \STATE $\hat p(r)\leftarrow\sum_i\alpha_i\hat p_i(r)$
            \IF{$|\hat p(r)-\tau_{\mathrm{vs}}|\le \delta\,\tau_{\mathrm{vs}}$}
                \STATE Query LLM $\to$ $w_i(r)$; compute $\alpha_i'(r)$ via Eq.~\eqref{eq:llm_weights}; $\tilde p(r)\leftarrow\hat p'(r)$
            \ELSE
                \STATE $\tilde p(r) \leftarrow \hat p(r)$
            \ENDIF
            \IF{$\tilde p(r)\ge\tau_{\mathrm{vs}}$}
                \STATE Add $r$ to $\widehat{\mathcal{V}}$
            \ENDIF
        \ENDFOR
        \STATE Sort $\widehat{\mathcal{V}}$ by descending $\tilde p(r)$ and initialize $\widehat{G}_{\mathrm{vs}}\leftarrow\widehat{G}_{\mathrm{sk}}$
        \FOR{each v-structure $r=(a\!\to\!b\!\leftarrow\!c) \in \widehat{\mathcal{V}}$ in sorted order}
            \IF{insertion is non-conflicting and acyclicity-preserving}
                \STATE Insert $a\!\to\!b\!\leftarrow\!c$ into $\widehat{G}_{\mathrm{vs}}$
            \ENDIF
        \ENDFOR

        \vspace{0.25em}
        \STATE {\bfseries // Step 3: Edge-orientation ensemble ($\kappa=\mathrm{eg}$)}
        \STATE Initialize resolved orientation set $\mathcal{O}\leftarrow\emptyset$
        \FOR{each undirected pair $\{a,b\}$ in $\widehat{G}_{\mathrm{vs}}$}
            \STATE For $r\in\mathcal{C}_{ab}=\{a\!\to\!b,\;b\!\to\!a,\;a\!-\!b\}$, $\hat p(r)\leftarrow\sum_i \alpha_i\,\hat p_i(r)$ ($\hat p_i(r)=0$ if $\{a,b\}\notin \widehat{E}_{i,\mathrm{sk}}$)
            \IF{one directed candidate in $\{a\!\to\!b,b\!\to\!a\}$ is the strict maximum over $\mathcal{C}_{ab}$}
                \STATE Add the corresponding tentative orientation to $\mathcal{O}$ and set $\tilde p(r)\leftarrow\hat p(r)$ for all $r\in\mathcal{C}_{ab}$
            \ELSE
                \STATE Query LLM $\to$ pair-level $w_i(r)$; compute $\alpha_i'(r)$; recompute $\tilde p(r)\leftarrow\hat p'(r)$ for all $r\in\mathcal{C}_{ab}$
                \IF{one directed candidate is now the strict maximum over $\mathcal{C}_{ab}$}
                    \STATE Add the corresponding tentative orientation to $\mathcal{O}$
                \ENDIF
            \ENDIF
        \ENDFOR
        \STATE Sort $\mathcal{O}$ by descending evidence strength $\pi(a,b)=\sum_{r\in\mathcal{C}_{ab}}\tilde p(r)$
        \FOR{each tentative orientation in $\mathcal{O}$}
            \IF{the orientation is non-conflicting and acyclicity-preserving}
                \STATE Insert the orientation into $\widehat{G}_{\mathrm{vs}}$ and apply Meek-rule closure
            \ENDIF
        \ENDFOR

        \vspace{0.25em}
        \STATE {\bfseries // Step 4: DAG completion}
        \STATE Complete any remaining undirected edges using either deterministic topological extension or LLM-assisted acyclicity-preserving completion
        \STATE Set $\widehat{G}\leftarrow$ the completed DAG
        \STATE {\bfseries return} $\widehat{G}$
    \end{algorithmic}
\end{algorithm}
\newpage
\subsection{Baseline Details}
\label{app:baseline_details}
We compare CEA with three groups of causal discovery baselines: statistical causal discovery (SCD) methods, ensemble-based methods, and LLM-based methods. The individual SCD baselines also serve as the expert library integrated by CEA. Detailed descriptions of these baselines are as follows:

\textbf{PC}~\cite{spirtes2000causation}: The Peter-Clark (PC) algorithm is a classical constraint-based causal discovery method. It first estimates the graph skeleton using conditional independence tests, and then orients edges by identifying v-structures and applying graphical orientation rules. PC assumes causal sufficiency, faithfulness, and reliable conditional independence testing, and outputs a CPDAG.

\textbf{FCI}~\cite{spirtes1995causal}: Fast Causal Inference (FCI) is a constraint-based causal discovery method designed for settings with possible latent confounders and selection bias. It allows more expressive endpoint marks and outputs a PAG, which represents causal relations that may involve hidden variables.

\textbf{GES}~\cite{chickering2002optimal}: Greedy Equivalence Search (GES) is a score-based causal discovery method that searches over Markov equivalence classes of DAGs. It optimizes a decomposable score through forward and backward search steps, and outputs an equivalence-class representation, typically a CPDAG.

\textbf{BOSS}~\cite{andrews2023fast}: Best Order Score Search (BOSS) is a permutation-based causal discovery method that searches over variable orderings and constructs causal graphs using a decomposable scoring criterion. It provides a strong order-search baseline and outputs a PDAG representation of the learned structure.

\textbf{GRaSP}~\cite{lam2022greedy}: Greedy Relaxations of the Sparsest Permutation (GRaSP) is a permutation-based causal discovery method that searches for sparse causal structures through greedy relaxations over variable orderings. It serves as a complementary order-based baseline to BOSS and outputs a PDAG.

\textbf{ICA-LiNGAM}~\cite{shimizu2006a}: Independent Component Analysis Linear Non-Gaussian Acyclic Model (ICA-LiNGAM) is a functional causal model that exploits linearity, non-Gaussianity, acyclicity, and independent noise assumptions to identify directed causal relations. ICA-LiNGAM outputs a DAG.

\textbf{DAGBag}~\cite{wang2014learning}: DAGBag is an ensemble-based causal discovery method based on bootstrap aggregation. It learns multiple DAGs from bootstrap samples and aggregates them into a consensus graph, aiming to reduce estimation variance and improve structural robustness. We include DAGBag as an external ensemble-based causal discovery baseline.

\textbf{Uniform-CEA}: Uniform-CEA is a controlled non-LLM variant of our framework. It uses the same expert library, bootstrap support estimation, hierarchical aggregation pipeline, and graph-validity checks as CEA, but replaces LLM-guided relation-specific expert weights with uniform weights. Thus, Uniform-CEA serves as a strong controlled ensemble baseline for isolating the contribution of context-aware LLM-guided expert reweighting.

\textbf{LLM-Greedy}~\cite{long2023causal}: LLM-Greedy is an LLM-based causal discovery baseline, where an LLM provides orientation knowledge for unresolved causal relations beyond Markov equivalence classes. In our setting, we use it as a direct LLM orientation baseline that greedily orients edges in a given skeleton. We follow the public implementation released at \href{https://github.com/StephLong614/Causal-disco-LLM-imperfect-experts}{\texttt{Causal-disco-LLM-imperfect-experts}}.

\textbf{LLM-BFS}~\cite{jiralerspong2024efficient}: LLM-BFS is an LLM-based causal discovery baseline that reduces exhaustive pairwise querying by using a breadth-first search strategy over candidate causal relations. We follow the public implementation released at \href{https://github.com/superkaiba/causal-llm-bfs}{\texttt{causal-llm-bfs}}.

For all LLM-involved methods, including CEA and the LLM-based baselines, we use the same LLM backbone and the same variable information whenever applicable to ensure a fair comparison. Further details, including package versions and inference settings, are provided in Appendix~\ref{app:implementation_details}.

\subsection{Implementation Details}
\label{app:implementation_details}
We implement CEA entirely in Python 3.10, with all individual SCD baselines directly adopted from the \texttt{causal-learn} library~\cite{zheng2023causallearn}, which is a comprehensive and fair platform for statistical causal discovery. All experiments reported in this paper are conducted on a virtual machine equipped with an AMD EPYC 7T83 64-core CPU. For fairness, we use a fixed set of hyperparameters across datasets within the same data regime and do not tune hyperparameters separately for individual datasets or baselines. Unless otherwise specified, we use bootstrap count $B=10$, bootstrap sampling ratio $0.75$, margin coefficient $\delta=0.5$, and threshold coefficient $0.5$ for all experiments.

For synthetic datasets sampled from benchmark Bayesian networks, we treat the data as discrete and use the \texttt{Chi-square} conditional independence test for constraint-based methods and the Bayesian Dirichlet equivalent uniform score (\texttt{local\_score\_BDeu}) for score-based and permutation-based methods. For real-world datasets, we treat the data as continuous and use Fisher's $Z$ test for conditional independence testing, with Bayesian Information Criterion score (\texttt{local\_score\_BIC}) as the score function. All methods are run under the same preprocessing, graph-conversion, and evaluation pipeline. Since different causal discovery algorithms may output DAGs, CPDAGs, or PAGs, we convert their outputs into a common adjacency representation before computing metrics.

\subsection{Evaluation Metrics}
\label{app:evaluation_metrics}
We evaluate causal discovery performance using two metrics: \textbf{RNB} (Ratio between Normalized Hamming Distance and Baseline Hamming Distance) and \textbf{AUC} (Area Under the Precision-Recall Curve), which respectively measure structural accuracy relative to baseline normalization and edge-level recovery quality, where lower RNB and higher AUC indicate better performance. Before computing metrics, each predicted graph is converted into a directed adjacency matrix representation.

Let $G=(V,E)$ denote the ground-truth graph and $\widehat{G}$ denote the predicted graph, with $r=|V|$. Let $A,\widehat{A}\in\{0,1\}^{r\times r}$ be their adjacency matrices, where $A_{ij}=1$ indicates a directed edge $i\!\to\!j$ in $G$.

\textbf{Normalized Hamming Distance (NHD):}
NHD measures the number of edges that appear in one graph but not the other, normalized by the total number of possible directed entries. Following the directed adjacency representation, we define
\begin{equation}
    \operatorname{NHD}(G,\widehat{G})
    =
    \frac{1}{r^2}
    \sum_{i=1}^{r}\sum_{j=1}^{r}
    \mathbf{1}\{A_{ij}\neq \widehat{A}_{ij}\}.
\end{equation}
In our implementation, anti-causal mistakes are counted twice. For example, predicting $j\!\to\!i$ when the ground truth is $i\!\to\!j$ contributes two adjacency-matrix errors.

\textbf{Baseline Hamming Distance (BHD):}
The baseline graph is defined as a graph containing the same number of directed edges as $G$, but all of them are incorrect. Let
\begin{equation}
    e=\sum_{i=1}^{r}\sum_{j=1}^{r} A_{ij}
\end{equation}
denote the number of directed edges in the ground-truth graph. We define the BHD as:
\begin{equation}
    \operatorname{BHD}(G)
    =
    \frac{2e}{r^2}.
\end{equation}
Since anti-causal mistakes are counted twice, the corresponding baseline number of mistakes is $2e$.

\textbf{Ratio between NHD and BHD (RNB):}
We follow the previous work \cite{kıcıman2024causal} and report the RNB as:
\begin{equation}
    \operatorname{RNB}(G,\widehat{G})
    =
    \frac{\operatorname{NHD}(G,\widehat{G})}
    {\operatorname{BHD}(G)}
    =
    \frac{
    \sum_{i=1}^{r}\sum_{j=1}^{r}
    \mathbf{1}\{A_{ij}\neq \widehat{A}_{ij}\}
    }{2e}.
\end{equation}
A lower RNB means that the causal discovery algorithm outperforms the floor baseline by a larger margin. In particular, $\operatorname{RNB}<1$ indicates that the predicted graph is closer to the ground truth than a graph with the same number of edges but a completely incorrect structure.

\textbf{Area Under the Precision-Recall Curve (AUC):}
We also report the area under the precision-recall curve. Given the ground-truth adjacency matrix $A$ and the predicted adjacency matrix $\widehat{A}$, precision and recall are computed as
\begin{equation}
    \operatorname{Precision}
    =
    \frac{\operatorname{TP}}{\operatorname{TP}+\operatorname{FP}},
    \qquad
    \operatorname{Recall}
    =
    \frac{\operatorname{TP}}{\operatorname{TP}+\operatorname{FN}},
\end{equation}
where TP, FP, and FN denote true positives, false positives, and false negatives in the predicted directed adjacency matrix. The area under the precision-recall curve is then computed as
\begin{equation}
    \operatorname{AUC}(G,\widehat{G})
    =
    \int_{0}^{1}\operatorname{Precision}(\rho)\,d\rho,
\end{equation}
where $\rho$ denotes recall along the precision-recall curve. In our implementation, AUC is computed using \texttt{cdt.metrics.precision\_recall} after converting both graphs into adjacency matrices.
\section{Additional Experimental Analyses}
\label{app:experiments}
This section provides additional analyses that complement the main experimental results in Section~\ref{experiments}. We first report the LLM query cost to quantify the efficiency benefit of margin-aware LLM invocation. We then provide error bars across repeated runs to assess the stability of CEA. Finally, we compare two DAG-completion strategies, last-resort LLM-assisted completion and deterministic DAG extension, to verify that the main performance gains of CEA do not come from the final completion step.

\subsection{LLM Query Cost}
\label{app:query_cost}
We report the number of LLM queries used by CEA and compare it with exhaustive LLM querying. For a dataset with $r=|V|$ variables, exhaustive pairwise querying for causal discovery requires
\begin{equation}
    Q_{\mathrm{full}}(r)=\binom{r}{2}=\frac{r(r-1)}{2}
\end{equation}
queries if each unordered variable pair is queried once. This is a conservative reference because querying ordered pairs separately would require $r(r-1)$ queries. Given the number of LLM queries used by CEA, denoted by $Q_{\mathrm{CEA}}$, we define the query reduction ratio as
\begin{equation}
    \operatorname{Reduction}
    =
    1-\frac{Q_{\mathrm{CEA}}}{Q_{\mathrm{full}}}.
\end{equation}

For synthetic datasets, we report the average query count over the four sample sizes. For real-world datasets, we report the fixed benchmark query count. As shown in Table~\ref{tab_query_cost}, CEA substantially reduces the number of LLM queries compared with exhaustive pairwise querying. Under the per-dataset averaged view reported in the table, CEA uses 237 LLM queries in total across all benchmarks, compared with 1483 exhaustive pairwise queries, corresponding to an overall reduction of \textbf{84.0\%}. Over all individual experimental runs, CEA uses 604 queries compared with 5389 exhaustive pairwise queries, corresponding to an \textbf{88.8\%} reduction. The reduction is especially pronounced on larger benchmark networks such as \textit{Alarm}, \textit{Child}, and \textit{Insurance}. The reduction is smaller on \textit{Arctic-Sea*} and \textit{Sangiovese*}, where the ground-truth structures are relatively dense or contain non-DAG relations, leading to more disputed cases. Nevertheless, CEA still requires fewer queries than exhaustive pairwise LLM causal discovery on every benchmark.
\begin{table}[h]
    \caption{LLM query cost comparison. $|V|$ denotes the number of variables, and $|E|$ denotes the number of ground-truth adjacent edges. $Q_{\mathrm{full}}$ denotes the number of LLM queries over all possible pairs of variable, i.e., $\binom{|V|}{2}$ exhaustive pairwise queries. $Q_{\mathrm{CEA}}$ denotes the total number of LLM queries used by our method. For synthetic datasets, CEA query counts are averaged over sample sizes $n\in\{500,1000,5000,10000\}$; real-world datasets are marked with $*$.}
    \label{tab_query_cost}
    \begin{center}\resizebox{0.8\textwidth}{!}{
    \begin{tabular}{c|cccccccc}
    \toprule
        Dataset & Alarm & Asia & Child & Insurance & Sachs & Alzheimer* & Arctic-Sea* & Sangiovese* \\
        \specialrule{0.75pt}{0.0pt}{2.5pt}
        $|V|$ & 37 & 8 & 20 & 27 & 11 & 9 & 12 & 14 \\
        \specialrule{0.5pt}{1.5pt}{2.5pt}
        $|E|$ & 46 & 8 & 25 & 52 & 17 & 16 & 30 & 51 \\
        \specialrule{0.5pt}{1.5pt}{2.5pt}
        $Q_{\mathrm{full}}$ & 666 & 28 & 190 & 351 & 55 & 36 & 66 & 91 \\
        \specialrule{0.5pt}{1.5pt}{2.5pt}
        $Q_{\mathrm{CEA}}$ \textbf{(Ours)} & \textbf{26.3} & \textbf{8.5} & \textbf{26.5} & \textbf{35.3} & \textbf{23.5} & \textbf{7.0} & \textbf{47.0} & \textbf{63.0} \\
        \specialrule{0.5pt}{1.5pt}{2.5pt}
        \textbf{Reduction} & \textbf{96.1\%} &\textbf{ 69.6\%} & \textbf{86.1\%} & \textbf{90.0\%} & \textbf{57.3\%} & \textbf{80.6\%} & \textbf{28.8\%} & \textbf{30.8\%} \\
    \bottomrule
    \end{tabular}
    }\end{center}
\end{table}

\subsection{Error Bars}
\label{app:error_bars}
We evaluate the robustness of CEA with respect to LLM stochasticity on the five synthetic benchmark datasets. Since a full repeated-run evaluation over all SCD experts and bootstrap resamples is computationally expensive, we fix the expert outputs and bootstrap-based support scores, and repeat the LLM-guided reweighting stage using the same LLM backbone with temperature $1.0$ to assess the stability of LLM's outputs. Table~\ref{tab_error_bar} reports the main values together with standard deviations across five repeated LLM runs, demonstrating that CEA yields consistently stable results.
\begin{table}[h]
    \captionsetup{width=\linewidth}
    \caption{Error bars of CEA with respect to LLM stochasticity on synthetic benchmark datasets. The expert outputs and bootstrap-based support scores are fixed, while the LLM-guided reweighting is repeated five times with temperature $1.0$. Results are reported as central value $\pm$ standard deviation.}
    \label{tab_error_bar}
    \centering
    \resizebox{\textwidth}{!}{
    \begin{tabular}{c|cc|cc|cc|cc|cc}
        \toprule
        Dataset & \multicolumn{2}{c}{Alarm} & \multicolumn{2}{c}{Asia} & \multicolumn{2}{c}{Child} & \multicolumn{2}{c}{Insurance} & \multicolumn{2}{c}{Sachs} \\
        \cmidrule(lr){2-3} \cmidrule(lr){4-5} \cmidrule(lr){6-7} \cmidrule(lr){8-9} \cmidrule(lr){10-11}
        Sample Size & RNB ($\downarrow$) & AUC ($\uparrow$) & RNB ($\downarrow$) & AUC ($\uparrow$) & RNB ($\downarrow$) & AUC ($\uparrow$) & RNB ($\downarrow$) & AUC ($\uparrow$) & RNB ($\downarrow$) & AUC ($\uparrow$) \\
        \specialrule{0.75pt}{0.0pt}{2.5pt}
        500 & 0.065$\pm$0.011 & 0.937$\pm$0.012 & 0.125$\pm$0.032 & 0.891$\pm$0.036 & 0.140$\pm$0.014 & 0.863$\pm$0.016 & 0.250$\pm$0.014 & 0.748$\pm$0.016 & 0.324$\pm$0.041 & 0.680$\pm$0.043 \\
        1000 & 0.076$\pm$0.013 & 0.924$\pm$0.013 & 0.125$\pm$0.094 & 0.891$\pm$0.099 & 0.100$\pm$0.000 & 0.902$\pm$0.000 & 0.202$\pm$0.010 & 0.802$\pm$0.011 & 0.176$\pm$0.000 & 0.836$\pm$0.000 \\
        5000 & 0.033$\pm$0.000 & 0.968$\pm$0.000 & 0.062$\pm$0.031 & 0.945$\pm$0.028 & 0.240$\pm$0.028 & 0.768$\pm$0.028 & 0.231$\pm$0.053 & 0.765$\pm$0.058 & 0.118$\pm$0.059 & 0.891$\pm$0.054 \\
        10000 & 0.022$\pm$0.010 & 0.979$\pm$0.011 & 0.062$\pm$0.031 & 0.945$\pm$0.028 & 0.080$\pm$0.000 & 0.922$\pm$0.000 & 0.144$\pm$0.000 & 0.858$\pm$0.000 & 0.059$\pm$0.030 & 0.945$\pm$0.027 \\
        \specialrule{0.5pt}{1.5pt}{2.5pt}
        Avg & 0.049$\pm$0.008 & 0.952$\pm$0.008 & 0.093$\pm$0.016 & 0.918$\pm$0.020 & 0.140$\pm$0.011 & 0.864$\pm$0.011 & 0.207$\pm$0.019 & 0.793$\pm$0.021 & 0.169$\pm$0.015 & 0.838$\pm$0.017 \\
        \bottomrule
    \end{tabular}
    }
\end{table}

\subsection{Last-resort LLM Completion vs Deterministic DAG Extension}
\label{app:completion_analysis}

We further examine whether the final DAG-completion strategy substantially affects the performance of CEA. In the main experiments, after hierarchical relation aggregation, confidence-ordered orientation, and Meek-rule closure, a small number of edges may still remain undirected. Our default implementation uses the LLM only as a last-resort completion aid, where the proposed orientation is accepted only if it preserves acyclicity and does not conflict with previously compelled orientations.

As a deterministic alternative, we replace this last-resort LLM completion with a deterministic DAG extension strategy. Specifically, after all resolved orientations have been inserted, we consider the remaining undirected edges in a fixed order and orient each edge according to a topologically compatible direction whenever it preserves acyclicity. This produces a DAG without LLM calls.
\begin{table}[h]
    \caption{Comparison between last-resort LLM-assisted completion and deterministic DAG extension.}
    \label{tab_completion_analysis}
    \begin{center}\resizebox{\textwidth}{!}{
    \begin{tabular}{c|>{\columncolor{shadow}}c>{\columncolor{shadow}}ccc|>{\columncolor{shadow}}c>{\columncolor{shadow}}ccc|>{\columncolor{shadow}}c>{\columncolor{shadow}}ccc|>{\columncolor{shadow}}c>{\columncolor{shadow}}ccc|>{\columncolor{shadow}}c>{\columncolor{shadow}}ccc}
        \toprule
        \multirow{3}{*}{Sample Size}
        & \multicolumn{4}{c|}{Alarm}
        & \multicolumn{4}{c|}{Asia}
        & \multicolumn{4}{c|}{Child}
        & \multicolumn{4}{c|}{Insurance}
        & \multicolumn{4}{c}{Sachs} \\
        \cmidrule(lr){2-5} \cmidrule(lr){6-9} \cmidrule(lr){10-13} \cmidrule(lr){14-17} \cmidrule(lr){18-21}
        & \multicolumn{2}{c}{\textbf{LLM (Ours)}} & \multicolumn{2}{c|}{DAG}
        & \multicolumn{2}{c}{\textbf{LLM (Ours)}} & \multicolumn{2}{c|}{DAG}
        & \multicolumn{2}{c}{\textbf{LLM (Ours)}} & \multicolumn{2}{c|}{DAG}
        & \multicolumn{2}{c}{\textbf{LLM (Ours)}} & \multicolumn{2}{c|}{DAG}
        & \multicolumn{2}{c}{\textbf{LLM (Ours)}} & \multicolumn{2}{c}{DAG} \\
        \cmidrule(lr){2-5} \cmidrule(lr){6-9} \cmidrule(lr){10-13} \cmidrule(lr){14-17} \cmidrule(lr){18-21}
        Metric
        & RNB ($\downarrow$) & AUC ($\uparrow$) & RNB ($\downarrow$) & AUC ($\uparrow$)
        & RNB ($\downarrow$) & AUC ($\uparrow$) & RNB ($\downarrow$) & AUC ($\uparrow$)
        & RNB ($\downarrow$) & AUC ($\uparrow$) & RNB ($\downarrow$) & AUC ($\uparrow$)
        & RNB ($\downarrow$) & AUC ($\uparrow$) & RNB ($\downarrow$) & AUC ($\uparrow$)
        & RNB ($\downarrow$) & AUC ($\uparrow$) & RNB ($\downarrow$) & AUC ($\uparrow$) \\
        \specialrule{0.75pt}{0.0pt}{2.5pt}
        500
        & 0.065 & 0.937 & 0.065 & 0.937
        & 0.125 & 0.891 & 0.125 & 0.891
        & 0.140 & 0.863 & 0.140 & 0.863
        & 0.250 & 0.748 & 0.260 & 0.737
        & 0.324 & 0.680 & 0.382 & 0.619 \\
        1000
        & 0.076 & 0.924 & 0.098 & 0.902
        & 0.125 & 0.891 & 0.188 & 0.819
        & 0.100 & 0.902 & 0.100 & 0.902
        & 0.202 & 0.802 & 0.288 & 0.700
        & 0.176 & 0.836 & 0.176 & 0.836 \\
        5000
        & 0.033 & 0.968 & 0.033 & 0.968
        & 0.062 & 0.945 & 0.062 & 0.945
        & 0.240 & 0.768 & 0.200 & 0.806
        & 0.231 & 0.765 & 0.240 & 0.754
        & 0.118 & 0.891 & 0.176 & 0.836 \\
        10000
        & 0.022 & 0.979 & 0.022 & 0.979
        & 0.062 & 0.945 & 0.062 & 0.945
        & 0.080 & 0.922 & 0.160 & 0.845
        & 0.144 & 0.858 & 0.135 & 0.868
        & 0.059 & 0.945 & 0.142 & 0.859 \\
        \specialrule{0.75pt}{1.5pt}{2.5pt}
        \textbf{Avg}
        & \textbf{0.049} & \textbf{0.952} & 0.054 & 0.947
        & \textbf{0.093} & \textbf{0.918} & 0.109 & 0.900
        & \textbf{0.140} & \textbf{0.864} & 0.150 & 0.854
        & \textbf{0.207} & \textbf{0.793} & 0.231 & 0.765
        & \textbf{0.169} & \textbf{0.838} & 0.219 & 0.788 \\
        \bottomrule
    \end{tabular}
    }\end{center}
\end{table}

Table~\ref{tab_completion_analysis} compares the two completion strategies on the five synthetic benchmark datasets across all sample sizes. Overall, the two strategies yield similar performance trends. The gap is relatively small, and deterministic completion even matches or slightly improves the LLM variant in several individual settings, such as \textit{Child} at $n=5000$ and \textit{Insurance} at $n=10000$. These results suggest that the main gains of CEA are not primarily driven by the final completion step, but by the preceding hierarchical expert aggregation and LLM-guided expert reweighting. The last-resort LLM completion mainly provides a lightweight way to resolve residual orientations when expert evidence remains insufficient.
\begin{figure}[H]
\centering
\fbox{\begin{minipage}{\linewidth}\small
\textbf{System Prompt:}

\vspace{0.25em}
You are a Causal Knowledge Expert with a sub-specialist interest in Causal Discovery. 
Your job is to read and analyze the provided context, then complete the required task accurately. 
Think carefully, then respond \textbf{ONLY} in the following JSON format:

\vspace{0.25em}
\texttt{\{}\texttt{"analysis":\{}\texttt{"[Expert Short Name]":"[brief analysis for this expert, 1 sentence for Expert-Dataset Match, 1 sentence for Causal Plausibility, and 1 sentence for Performance at the current graph level.]",}\texttt{"...":"..."}\texttt{\},}\\
\texttt{"answer":[}\texttt{\{"name":"[Expert Short Name]","weight":[integer in 0--10],"reason":"[brief justification, 1--2 sentences]"\},}\texttt{\{"...":"..."\}}\texttt{]}\texttt{\}}

\vspace{0.25em}
Replace the instruction values with your analysis and your answers, such that each expert's \textbf{Short Name} must appear exactly once in the \texttt{name} field.
\end{minipage}
}
\caption{System prompt and JSON response format for CEA.}
\label{fig:system_prompt}
\end{figure}
\section{Prompt and Response Examples}
\subsection{Prompt Templates}
\label{app:prompt_templates}
We provide prompt templates used by CEA for LLM-guided expert reweighting (Figures~\ref{fig:system_prompt}--\ref{fig:eg_prompt}). All prompts share the same system instruction and JSON response format, while the user prompt is specialized to the current graph level: skeleton, v-structure, or edge orientation. For readability, we show the templates with placeholders such as [Dataset], [Variable Description], [Dataset Profile], and [Expert Opinions]. In our implementation, these placeholders are instantiated with dataset names, variable descriptions, dataset statistics, expert information, and relation-specific expert opinions.
\begin{figure}[H]
\centering
\fbox{\begin{minipage}{\linewidth}\small
\textbf{Shared Prompt Fields: Dataset Profile and Expert Opinions}

\vspace{0.25em}
\textbf{Dataset Profile:}

\vspace{0.25em}
For discrete datasets:

\vspace{0.25em}
Dataset: \texttt{[VALUE]} Samples, \texttt{[VALUE]} Features, Discrete Variables (\texttt{[TEXT]} Sample Size). Sample-to-feature ratio: \texttt{[VALUE]} (\texttt{[TEXT]}). \\
Cardinality: Min=\texttt{[VALUE]}, Max=\texttt{[VALUE]}, Median=\texttt{[VALUE]} (Type: \texttt{[TEXT]}). \\
Concentration: \texttt{[TEXT]} (Mean Mode Frequency \texttt{[VALUE]}\%). \\
Dependence (Cramer's V): Mean V=\texttt{[VALUE]}, Max V=\texttt{[VALUE]}; Strong Associations are \texttt{[TEXT]}. \\
Entropy: \texttt{[TEXT]} (Mean normalized feature entropy). \\
Noise: \texttt{[TEXT]}.

\vspace{0.25em}
For continuous datasets:

\vspace{0.25em}
Dataset: \texttt{[VALUE]} Samples, \texttt{[VALUE]} Features, Continuous Variables (\texttt{[TEXT]} Sample Size). Sample-to-feature ratio: \texttt{[VALUE]} (\texttt{[TEXT]}). \\
Scaling: \texttt{[TEXT]} (std ratio \texttt{[VALUE]}x). \\
Distributions: \texttt{[TEXT]} Skewness, \texttt{[TEXT]} Tails, Features are \texttt{[TEXT]}. \\
Outliers: \texttt{[TEXT]} (MAD-based). \\
Dependence: Mean $|r|$=\texttt{[VALUE]}, Max $|r|$=\texttt{[VALUE]}; Multicollinearity \texttt{[TEXT]} Exists. \\
Nonlinearity: \texttt{[TEXT]} Signal (Pearson--Spearman Gap). \\
Noise: \texttt{[TEXT]}.

\vspace{0.25em}
\textbf{Expert Opinions:}

\vspace{0.25em}
Expert 1: \\
Short Name: \texttt{[EXPERT]} \\
Full Name: \texttt{[TEXT]} \\
Type: \texttt{[TEXT]} \\
Output: \texttt{[TEXT]} \\
Description: \texttt{[TEXT]} \\
Information: \texttt{[TEXT]} \\
Opinion: \texttt{[AGREE/DISAGREE]} for $\mathrm{sk}$ and $\mathrm{vs}$ queries, or \texttt{[OPTION A/B/C/D]} for $\mathrm{eg}$ queries.

\vspace{0.25em}
Expert 2: ...

\vspace{0.25em}
Expert N: ...
\end{minipage}
}
\caption{Shared dataset-profile and expert-opinion fields inserted into relation-specific prompts.}
\label{fig:shared_prompt_fields}
\end{figure}
\newpage
\begin{figure}[H]
\centering
\fbox{\begin{minipage}{\linewidth}\small
\textbf{User Prompt: Skeleton Expert Reweighting}

\vspace{0.25em}
Consider the following \textbf{CONTEXT}:

Multiple Causal Discovery experts are recovering causal relationships among variables in the \texttt{[NAME]} dataset.

At the \textbf{Skeleton} level, experts provide their opinions on whether the following two variables should have a \textbf{direct graph-level adjacency} in the causal graph skeleton (without considering the causal direction).

\vspace{0.25em}
Variables and their descriptions:

(1) \texttt{[VARIABLE\_1]}: \texttt{[DESCRIPTION\_1]} \\
(2) \texttt{[VARIABLE\_2]}: \texttt{[DESCRIPTION\_2]}

\vspace{0.25em}
\textbf{Dataset Profile:}

\texttt{[DATASET\_PROFILE]}

\vspace{0.25em}
\textbf{Expert Opinions:}

Out of \texttt{[VALUE]} experts, \texttt{[VALUE]} experts support a \textbf{direct graph-level adjacency} between \texttt{[VARIABLE\_1]} and \texttt{[VARIABLE\_2]}. \texttt{[EXPERT\_OPINIONS]}

\vspace{0.25em}
Given the following \textbf{TASK}:

(1) Assess whether each expert's opinion plausibly fits the dataset characteristics.

(2) Evaluate whether a \textbf{direct graph-level adjacency} is plausible between \texttt{[VARIABLE\_1]} and \texttt{[VARIABLE\_2]}, given variable semantics, domain constraints, and possible mediation by other observed variables.

(3) Finally, based on (1), (2), and the expert agreement pattern, assign a reliability weight in \([0,1,2,3,\ldots,10]\) to each expert for \textbf{THIS} relation.

(4) The score is based on the following rubric:

\hspace{1.4em}-- \([0,1,2,3,4]\) Expert--Dataset Match.

\hspace{1.4em}-- \([0,1,2,3]\) Direct-adjacency Plausibility under the dataset's graph abstraction.

\hspace{1.4em}-- \([0,1,2,3]\) Performance of Expert at the \textbf{Skeleton} level.

(5) Use extreme weights such as 10 or 0 only when the evidence is overwhelming; otherwise prefer calibrated non-extreme weights.

\vspace{0.25em}
Your answer is:
\end{minipage}
}
\caption{Prompt template for skeleton-level LLM-guided expert reweighting.}
\label{fig:sk_prompt}
\end{figure}
\begin{figure}[H]
\centering
\fbox{\begin{minipage}{\linewidth}\small
\textbf{User Prompt: V-structure Expert Reweighting}

\vspace{0.25em}
Consider the following \textbf{CONTEXT}:

Multiple Causal Discovery experts are recovering causal relationships among variables in the \texttt{[NAME]} dataset.

At the \textbf{V-structure} level, experts provide their opinions on whether the following three variables form a \textbf{direct unshielded collider} of the pattern (1) $\to$ (2) $\leftarrow$ (3). This requires (1) and (3) to be plausible direct parents of (2), and also requires (1) and (3) not to have a direct adjacency under the dataset's causal abstraction.

\vspace{0.25em}
Variables and their descriptions:

(1) \texttt{[VARIABLE\_1]}: \texttt{[DESCRIPTION\_1]} \\
(2) \texttt{[VARIABLE\_2]}: \texttt{[DESCRIPTION\_2]} \\
(3) \texttt{[VARIABLE\_3]}: \texttt{[DESCRIPTION\_3]}

\vspace{0.25em}
\textbf{Dataset Profile:}

\texttt{[DATASET\_PROFILE]}

\vspace{0.25em}
\textbf{Expert Opinions:}

Out of \texttt{[VALUE]} experts, \texttt{[VALUE]} experts support the \textbf{direct unshielded collider} \texttt{[VARIABLE\_1]} $\to$ \texttt{[VARIABLE\_2]} $\leftarrow$ \texttt{[VARIABLE\_3]}. \texttt{[EXPERT\_OPINIONS]}

\vspace{0.25em}
Given the following \textbf{TASK}:

(1) Assess whether each expert's opinion plausibly fits the dataset characteristics.

(2) Evaluate whether \texttt{[VARIABLE\_1]} and \texttt{[VARIABLE\_3]} are plausible \textbf{independent parents} of \texttt{[VARIABLE\_2]} under an unshielded collider structure.

(3) Finally, based on (1), (2), and the expert agreement pattern, assign a reliability weight in \([0,1,2,3,\ldots,10]\) to each expert for \textbf{THIS} relation.

(4) The score is based on the following rubric:

\hspace{1.4em}-- \([0,1,2,3,4]\) Expert--Dataset Match.

\hspace{1.4em}-- \([0,1,2,3]\) Unshielded-collider Plausibility, including direct-parent plausibility and parent non-adjacency.

\hspace{1.4em}-- \([0,1,2,3]\) Performance of Expert at the \textbf{V-structure} level.

(5) Use extreme weights such as 10 or 0 only when the evidence is overwhelming; otherwise prefer calibrated non-extreme weights.

\vspace{0.25em}
Your answer is:
\end{minipage}
}
\caption{Prompt template for v-structure-level LLM-guided expert reweighting.}
\label{fig:vs_prompt}
\end{figure}
\begin{figure}[H]
\centering
\fbox{\begin{minipage}{\linewidth}\small
\textbf{User Prompt: Edge Orientation Expert Reweighting}

\vspace{0.25em}
Consider the following \textbf{CONTEXT}:

Multiple Causal Discovery experts are recovering causal relationships among variables in the \texttt{[NAME]} dataset.

At the \textbf{Edge Orientation} level, the adjacency between the following two variables has already been considered, and experts provide their opinions on the most plausible orientation of the following two variables.

\vspace{0.25em}
Variables and their descriptions:

(1) \texttt{[VARIABLE\_1]}: \texttt{[DESCRIPTION\_1]} \\
(2) \texttt{[VARIABLE\_2]}: \texttt{[DESCRIPTION\_2]}

\vspace{0.25em}
\textbf{Edge Orientation Options:}

\hspace{1.4em}-- OPTION A: the expert supports that changing \texttt{[VARIABLE\_1]} causes a change in \texttt{[VARIABLE\_2]}.

\hspace{1.4em}-- OPTION B: the expert supports that changing \texttt{[VARIABLE\_2]} causes a change in \texttt{[VARIABLE\_1]}.

\hspace{1.4em}-- OPTION C: the expert supports an undirected adjacency between \texttt{[VARIABLE\_1]} and \texttt{[VARIABLE\_2]}, meaning that the expert supports adjacency but abstains from deciding the orientation.

\hspace{1.4em}-- OPTION D: the expert does not provide usable evidence for this adjacency or orientation.

\vspace{0.25em}
\textbf{Dataset Profile:}

\texttt{[DATASET\_PROFILE]}

\vspace{0.25em}
\textbf{Expert Opinions:}

Out of \texttt{[VALUE]} experts: \\
\hspace{1.4em}-- \texttt{[VALUE]} experts support OPTION A (\texttt{[VARIABLE\_1]} $\to$ \texttt{[VARIABLE\_2]}). \\
\hspace{1.4em}-- \texttt{[VALUE]} experts support OPTION B (\texttt{[VARIABLE\_2]} $\to$ \texttt{[VARIABLE\_1]}). \\
\hspace{1.4em}-- \texttt{[VALUE]} experts support an undirected adjacency or abstain from orientation. \texttt{[EXPERT\_OPINIONS]}

\vspace{0.25em}
Given the following \textbf{TASK}:

(1) Assess whether each expert's opinion plausibly fits the dataset characteristics.

(2) Evaluate which experts provide reliable evidence for orienting this already-considered adjacency between \texttt{[VARIABLE\_1]} and \texttt{[VARIABLE\_2]}, given variable semantics and domain constraints.

(3) Pay special attention to whether each expert is suitable for the \textbf{Edge Orientation} level.

(4) Treat OPTION C as orientation abstention, not as directional evidence for either OPTION A or OPTION B.

(5) Finally, assign a reliability weight in \([0,1,2,3,\ldots,10]\) to each expert for \textbf{THIS} relation.

(6) The score is based on the following rubric:

\hspace{1.4em}-- \([0,1,2,3,4]\) Expert--Dataset Match.

\hspace{1.4em}-- \([0,1,2,3]\) Directional Causal Plausibility.

\hspace{1.4em}-- \([0,1,2,3]\) Performance of Expert at the \textbf{Edge Orientation} level.

(7) Use extreme weights such as 10 or 0 only when the evidence is overwhelming; otherwise prefer calibrated non-extreme weights.

\vspace{0.25em}
Your answer is:
\end{minipage}
}
\caption{Prompt template for edge-orientation-level LLM-guided expert reweighting.}
\label{fig:eg_prompt}
\end{figure}

\subsection{Example LLM Response and Analysis}
\label{app:response_examples}

We provide two examples to show how CEA uses the LLM as a meta-referee for expert reweighting.

\paragraph{Expert Reweighting for Skeleton.}
Figure~\ref{fig:sk_response_example} shows a skeleton-level example on the \textit{Alarm} dataset for the candidate adjacency between \texttt{VENTLUNG} and \texttt{KINKEDTUBE}. Although four out of six experts support the adjacency, the LLM does not simply perform majority voting. Instead, it assigns high weights to BOSS, GES, GRaSP, and PC because their assumptions and inference mechanisms are compatible with the large, low-cardinality discrete dataset, and because their AGREE opinions are consistent with the direct mechanical relation between tube obstruction and effective lung ventilation. In contrast, LiNGAM is assigned a minimal weight because its continuous linear non-Gaussian assumptions are poorly matched to the discrete \textit{Alarm} variables. FCI receives only a moderate weight: although its conditional-independence testing is type-compatible, its DISAGREE opinion conflicts with the relation-level causal plausibility. This example demonstrates how CEA combines dataset compatibility and causal plausibility to produce calibrated relation-specific expert weights.

\paragraph{Expert Reweighting for Edge Orientation.}
Figure~\ref{fig:eg_response_example} presents an edge orientation example on the \textit{Sachs} dataset for the already-considered adjacency between \texttt{PKA} and \texttt{Jnk}. Three experts support \texttt{PKA} $\to$ \texttt{Jnk}, while the remaining three abstain from orientation. The LLM assigns high weights to FCI and PC because their chi-square conditional-independence tests are type-compatible with the large discrete \textit{Sachs} setting and their OPTION A opinions provide usable directional evidence. LiNGAM also supports the same direction, but receives only a moderate weight because its continuous linear non-Gaussian assumptions are less compatible with the discretized low-cardinality measurements. Meanwhile, BOSS, GES, and GRaSP receive lower weights because their OPTION C outputs indicate orientation abstention rather than directional support. This example highlights that CEA can distinguish experts even when they appear to support the same direction, and that undirected outputs are treated as orientation uncertainty rather than as evidence for either causal direction.
\begin{figure}[H]
\centering
\fbox{\begin{minipage}{\linewidth}\small
\textbf{Example 1: Expert Reweighting for Skeleton on Alarm}

\vspace{0.25em}
\textbf{Relation:} \texttt{VENTLUNG} -- \texttt{KINKEDTUBE}

\vspace{0.25em}
\textbf{Task:} Assign relation-specific reliability weights to each expert for whether \texttt{VENTLUNG} and \texttt{KINKEDTUBE} should have a direct graph-level adjacency in the skeleton.

\vspace{0.25em}
\textbf{Variable Semantics:}

\texttt{VENTLUNG}: effective ventilation status of the lungs through the airway and ventilator circuit.

\texttt{KINKEDTUBE}: kinked or obstructed endotracheal or breathing tube that restricts airflow.

\vspace{0.25em}
\textbf{Expert Opinions:}

Out of 6 experts, 4 experts support a direct graph-level adjacency between \texttt{VENTLUNG} and \texttt{KINKEDTUBE}.

\vspace{0.25em}
\begin{center}
\begin{tabular}{c|cccccc}
\toprule
Expert & BOSS & FCI & GES & GRaSP & LiNGAM & PC \\
\midrule
Opinion & AGREE & DISAGREE & AGREE & AGREE & DISAGREE & AGREE \\
\bottomrule
\end{tabular}
\end{center}

\vspace{0.25em}
\textbf{LLM-assigned Weights:}

\vspace{0.25em}
\begin{center}
\begin{tabular}{c|cccccc}
\toprule
Expert & BOSS & FCI & GES & GRaSP & LiNGAM & PC \\
\midrule
Weight & 9 & 4 & 9 & 8 & 1 & 9 \\
\bottomrule
\end{tabular}
\end{center}

\vspace{0.25em}
\textbf{LLM Analyses:}

\vspace{0.1em}
\texttt{BOSS}: BOSS is well matched to the large low-cardinality discrete ALARM data with a type-compatible BDeu score; its AGREE opinion is highly plausible because tube obstruction directly affects effective lung ventilation, though permutation/order equivalence makes its skeleton evidence slightly less definitive.

\vspace{0.1em}
\texttt{FCI}: FCI is reasonably matched through chi-square CI tests on large discrete data, but its DISAGREE opinion conflicts with the direct mechanical relation between KINKEDTUBE and VENTLUNG and may reflect conservative conditioning or PAG extraction uncertainty.

\vspace{0.1em}
\texttt{GES}: GES is strongly matched because BDeu scoring fits the discrete data and the sample size supports stable score comparisons; its AGREE opinion is strongly plausible and carries high skeleton-level reliability.

\vspace{0.1em}
\texttt{GRaSP}: GRaSP is well matched to the large discrete setting with BDeu-based permutation search, and its AGREE opinion fits the direct mechanical effect from KINKEDTUBE to VENTLUNG, though ordering-based searches can admit near-equivalent sparse explanations.

\vspace{0.1em}
\texttt{LiNGAM}: LiNGAM is poorly matched because the variables are discrete and low-cardinality rather than continuous linear non-Gaussian measurements; its DISAGREE opinion conflicts with the direct mechanism, making its skeleton output unreliable here.

\vspace{0.1em}
\texttt{PC}: PC is strongly matched because the sample size is large and chi-square CI tests are appropriate for low-cardinality discrete variables; its AGREE opinion is plausible and supported by strong skeleton-level evidence.

\vspace{0.25em}
\textbf{Final Weight Justifications:}

\vspace{0.1em}
\texttt{BOSS (9)}: Type-compatible BDeu scoring and plausible AGREE evidence support a high weight, with slight caution from permutation/order uncertainty.

\vspace{0.1em}
\texttt{FCI (4)}: Although methodologically reasonable for discrete data, its DISAGREE vote conflicts with a direct mechanical relation.

\vspace{0.1em}
\texttt{GES (9)}: Strong discrete-data compatibility and strong skeleton reliability support its plausible AGREE vote.

\vspace{0.1em}
\texttt{GRaSP (8)}: Good data compatibility and plausible AGREE evidence support a high weight, with moderate caution from permutation-based uncertainty.

\vspace{0.1em}
\texttt{LiNGAM (1)}: Severe assumption mismatch with discrete ALARM variables and an implausible DISAGREE vote justify minimal reliability.

\vspace{0.1em}
\texttt{PC (9)}: Type-compatible CI testing and strong skeleton-level reliability support its plausible AGREE vote.
\end{minipage}
}
\caption{LLM response for skeleton-level expert reweighting on the Alarm dataset.}
\label{fig:sk_response_example}
\end{figure}
\newpage
\begin{figure}[H]
\centering
\fbox{\begin{minipage}{\linewidth}\small
\textbf{Example 2: Expert Reweighting for Edge Orientation on Sachs}

\vspace{0.25em}
\textbf{Relation:} \texttt{PKA} $\to$ \texttt{Jnk}

\vspace{0.25em}
\textbf{Task:} Assign relation-specific reliability weights to each expert for orienting the already-considered adjacency between \texttt{PKA} and \texttt{Jnk}.

\vspace{0.25em}
\textbf{Variable Semantics:}

\texttt{PKA}: protein kinase A activity level, a cAMP-dependent kinase readout involved in broad intracellular signaling regulation.

\texttt{Jnk}: phosphorylated JNK or c-Jun N-terminal kinase activity level, a stress-activated MAP kinase readout involved in inflammatory and apoptotic signaling.

\vspace{0.25em}
\textbf{Expert Opinions:}

Out of 6 experts, 3 experts support \texttt{PKA} $\to$ \texttt{Jnk}, 0 experts support \texttt{Jnk} $\to$ \texttt{PKA}, and 3 experts abstain from orientation. Here, OPTION A denotes \texttt{PKA} $\to$ \texttt{Jnk}, and OPTION C denotes orientation abstention.

\vspace{0.25em}
\begin{center}
\begin{tabular}{c|cccccc}
\toprule
Expert & BOSS & FCI & GES & GRaSP & LiNGAM & PC \\
\midrule
Opinion & OPTION C & OPTION A & OPTION C & OPTION C & OPTION A & OPTION A \\
\bottomrule
\end{tabular}
\end{center}

\vspace{0.25em}
\textbf{LLM-assigned Weights:}

\vspace{0.25em}
\begin{center}
\begin{tabular}{c|cccccc}
\toprule
Expert & BOSS & FCI & GES & GRaSP & LiNGAM & PC \\
\midrule
Weight & 4 & 8 & 4 & 3 & 5 & 8 \\
\bottomrule
\end{tabular}
\end{center}

\vspace{0.25em}
\textbf{LLM Analyses:}

\vspace{0.1em}
\texttt{BOSS}: BOSS is well matched to the large low-cardinality discrete SACHS data with a type-compatible BDeu score, but OPTION C is orientation abstention and provides no usable directional evidence for either \texttt{PKA} $\to$ \texttt{Jnk} or the reverse direction.

\vspace{0.1em}
\texttt{FCI}: FCI is well matched through chi-square CI tests on large discrete data and can be useful when hidden confounding is plausible in biological signaling; its OPTION A opinion supports the biologically plausible \texttt{PKA} $\to$ \texttt{Jnk} direction, though PAG endpoint uncertainty and circle-to-tail conversion limit certainty.

\vspace{0.1em}
\texttt{GES}: GES is well matched because BDeu scoring fits the discrete variables and the sample size supports stable score comparisons, but its OPTION C output is an undirected CPDAG edge and therefore provides little directional evidence.

\vspace{0.1em}
\texttt{GRaSP}: GRaSP is reasonably matched to the large discrete dataset with BDeu scoring, but it abstains from orientation and its edge directions are permutation-induced when present, making its evidence weak here.

\vspace{0.1em}
\texttt{LiNGAM}: LiNGAM supports the biologically plausible \texttt{PKA} $\to$ \texttt{Jnk} direction, but it is poorly matched because the SACHS variables are discretized low-cardinality measurements rather than continuous linear non-Gaussian.

\vspace{0.1em}
\texttt{PC}: PC is well matched because the sample size is large and chi-square CI tests are appropriate for low-cardinality discrete variables; its OPTION A opinion supports the biologically plausible direction, though causal sufficiency may be less secure in cellular signaling systems.

\vspace{0.25em}
\textbf{Final Weight Justifications:}

\vspace{0.1em}
\texttt{BOSS (4)}: Good discrete-data compatibility, but OPTION C is only orientation abstention and gives weak edge-orientation evidence.

\vspace{0.1em}
\texttt{FCI (8)}: Type-compatible CI testing and support for the plausible \texttt{PKA} $\to$ \texttt{Jnk} direction justify a high weight, with caution from PAG endpoint uncertainty.

\vspace{0.1em}
\texttt{GES (4)}: Strong data compatibility, but its undirected CPDAG output indicates orientation uncertainty rather than directional support.

\vspace{0.1em}
\texttt{GRaSP (3)}: Reasonable dataset compatibility, but orientation abstention and permutation-based uncertainty provide little usable directional evidence.

\vspace{0.1em}
\texttt{LiNGAM (5)}: The proposed direction is biologically plausible, but assumption mismatch with discretized SACHS variables limits reliability.

\vspace{0.1em}
\texttt{PC (8)}: Type-compatible CI testing and plausible OPTION A evidence support a high weight, with caution about possible causal-sufficiency violations.
\end{minipage}
}
\caption{LLM response for edge orientation-level expert reweighting on the Sachs dataset.}
\label{fig:eg_response_example}
\end{figure}
\section{Full Results}
\label{app:full_results}
Table~\ref{tab_full_synthetic} provides the complete causal discovery results on the synthetic benchmark datasets across sample sizes $n\in\{500,1000,5000,10000\}$. We report per-sample size results for these datasets as only the synthetic benchmarks are evaluated under multiple sample-size regimes. The table provides a fine-grained view of the averaged results reported in Table~\ref{tab_main}, showing that CEA consistently achieves the strongest performance across different sample sizes.
\begin{table}[H]
    \caption{Complete causal discovery results on synthetic datasets across different sample sizes. The best values are in \textcolor{red}{\textbf{bold}}, and the second-best are \underline{\textcolor{blue}{underlined}}.}
    \label{tab_full_synthetic}
    \begin{center}\resizebox{\textwidth}{!}{
    \begin{tabular}{c|c|cc|cc|cc|cc|cc}
    \toprule
        \multicolumn{1}{c}{\multirow{2}{*}{Sample Size}} & Method
        & \multicolumn{2}{c|}{Alarm}
        & \multicolumn{2}{c|}{Asia}
        & \multicolumn{2}{c|}{Child}
        & \multicolumn{2}{c|}{Insurance}
        & \multicolumn{2}{c}{Sachs} \\
        \cmidrule(lr){3-4} \cmidrule(lr){5-6} \cmidrule(lr){7-8} \cmidrule(lr){9-10} \cmidrule(lr){11-12}
        \multicolumn{1}{c}{} & Metric
        & RNB ($\downarrow$) & AUC ($\uparrow$)
        & RNB ($\downarrow$) & AUC ($\uparrow$)
        & RNB ($\downarrow$) & AUC ($\uparrow$)
        & RNB ($\downarrow$) & AUC ($\uparrow$)
        & RNB ($\downarrow$) & AUC ($\uparrow$) \\
    \specialrule{0.75pt}{0.0pt}{2.5pt}

        \multirow{8}{*}{500} 
        & BOSS
        & 0.380 & 0.690
        & 0.500 & 0.586
        & 0.440 & \underline{\textcolor{blue}{0.677}}
        & \underline{\textcolor{blue}{0.327}} & \underline{\textcolor{blue}{0.657}}
        & 0.559 & 0.550 \\
        & FCI
        & 0.337 & 0.659
        & 0.375 & 0.615
        & 0.440 & 0.574
        & 0.413 & 0.541
        & \underline{\textcolor{blue}{0.441}} & \underline{\textcolor{blue}{0.601}} \\
        & GES
        & \underline{\textcolor{blue}{0.207}} & 0.791
        & \underline{\textcolor{blue}{0.188}} & \underline{\textcolor{blue}{0.834}}
        & 0.440 & \underline{\textcolor{blue}{0.677}}
        & \underline{\textcolor{blue}{0.327}} & 0.654
        & 0.647 & 0.363 \\
        & GRaSP
        & 0.217 & 0.782
        & 0.500 & 0.586
        & 0.440 & 0.659
        & 0.481 & 0.511
        & 0.559 & 0.550 \\
        & ICA-LiNGAM
        & 0.739 & 0.428
        & 0.250 & 0.766
        & 0.500 & 0.562
        & 0.673 & 0.363
        & 0.588 & 0.422 \\
        & PC
        & 0.315 & 0.670
        & 0.375 & 0.615
        & 0.380 & 0.625
        & 0.490 & 0.454
        & 0.588 & 0.483 \\
        & Uniform CEA
        & 0.228 & \underline{\textcolor{blue}{0.801}}
        & 0.312 & 0.693
        & \underline{\textcolor{blue}{0.340}} & 0.665
        & 0.375 & 0.590
        & 0.529 & 0.508 \\
        & \textbf{CEA (Ours)}
        & \textcolor{red}{\textbf{0.065}} & \textcolor{red}{\textbf{0.937}}
        & \textcolor{red}{\textbf{0.125}} & \textcolor{red}{\textbf{0.891}}
        & \textcolor{red}{\textbf{0.140}} & \textcolor{red}{\textbf{0.863}}
        & \textcolor{red}{\textbf{0.250}} & \textcolor{red}{\textbf{0.748}}
        & \textcolor{red}{\textbf{0.324}} & \textcolor{red}{\textbf{0.680}} \\

        \specialrule{0.75pt}{1.5pt}{2.5pt}

        \multirow{8}{*}{1000} 
        & BOSS
        & 0.337 & 0.727
        & 0.625 & 0.605
        & 0.440 & 0.659
        & 0.404 & 0.625
        & 0.500 & 0.725 \\
        & FCI
        & 0.239 & 0.770
        & 0.562 & 0.441
        & 0.180 & 0.823
        & \underline{\textcolor{blue}{0.308}} & \underline{\textcolor{blue}{0.675}}
        & \underline{\textcolor{blue}{0.265}} & \underline{\textcolor{blue}{0.827}} \\
        & GES
        & 0.152 & 0.854
        & \underline{\textcolor{blue}{0.312}} & \underline{\textcolor{blue}{0.724}}
        & 0.340 & 0.734
        & 0.375 & 0.601
        & 0.559 & 0.523 \\
        & GRaSP
        & 0.217 & 0.801
        & 0.500 & 0.586
        & 0.400 & 0.692
        & 0.433 & 0.550
        & 0.529 & 0.587 \\
        & ICA-LiNGAM
        & 0.446 & 0.605
        & 0.625 & 0.414
        & 0.680 & 0.415
        & 0.558 & 0.411
        & 0.618 & 0.442 \\
        & PC
        & 0.239 & 0.760
        & 0.562 & 0.441
        & 0.180 & 0.819
        & 0.327 & 0.652
        & 0.618 & 0.472 \\
        & Uniform CEA
        & \underline{\textcolor{blue}{0.141}} & \underline{\textcolor{blue}{0.863}}
        & 0.438 & 0.567
        & \underline{\textcolor{blue}{0.160}} & \underline{\textcolor{blue}{0.841}}
        & 0.337 & 0.642
        & 0.294 & 0.757 \\
        & \textbf{CEA (Ours)}
        & \textcolor{red}{\textbf{0.076}} & \textcolor{red}{\textbf{0.924}}
        & \textcolor{red}{\textbf{0.125}} & \textcolor{red}{\textbf{0.891}}
        & \textcolor{red}{\textbf{0.100}} & \textcolor{red}{\textbf{0.902}}
        & \textcolor{red}{\textbf{0.202}} & \textcolor{red}{\textbf{0.802}}
        & \textcolor{red}{\textbf{0.176}} & \textcolor{red}{\textbf{0.836}} \\

        \specialrule{0.75pt}{1.5pt}{2.5pt}

        \multirow{8}{*}{5000} 
        & BOSS
        & 0.359 & 0.763
        & 0.500 & 0.695
        & \textcolor{red}{\textbf{0.240}} & \textcolor{red}{\textbf{0.838}}
        & 0.375 & 0.684
        & 0.500 & 0.750 \\
        & FCI
        & \underline{\textcolor{blue}{0.087}} & \underline{\textcolor{blue}{0.915}}
        & 0.375 & 0.648
        & 0.400 & 0.659
        & 0.356 & 0.648
        & \underline{\textcolor{blue}{0.235}} & \underline{\textcolor{blue}{0.823}} \\
        & GES
        & 0.098 & 0.907
        & \underline{\textcolor{blue}{0.188}} & \underline{\textcolor{blue}{0.834}}
        & \textcolor{red}{\textbf{0.240}} & \textcolor{red}{\textbf{0.838}}
        & 0.327 & 0.669
        & 0.500 & 0.750 \\
        & GRaSP
        & 0.391 & 0.713
        & 0.562 & 0.563
        & \underline{\textcolor{blue}{0.280}} & \underline{\textcolor{blue}{0.806}}
        & 0.317 & \underline{\textcolor{blue}{0.697}}
        & 0.500 & 0.750 \\
        & ICA-LiNGAM
        & 0.522 & 0.617
        & 0.250 & 0.766
        & 0.700 & 0.433
        & 0.865 & 0.267
        & 0.647 & 0.398 \\
        & PC
        & 0.109 & 0.895
        & 0.375 & 0.648
        & 0.400 & 0.643
        & 0.385 & 0.610
        & 0.500 & 0.573 \\
        & Uniform CEA
        & 0.120 & 0.890
        & \underline{\textcolor{blue}{0.188}} & \underline{\textcolor{blue}{0.834}}
        & 0.320 & 0.715
        & \underline{\textcolor{blue}{0.298}} & 0.694
        & 0.500 & 0.750 \\
        & \textbf{CEA (Ours)}
        & \textcolor{red}{\textbf{0.033}} & \textcolor{red}{\textbf{0.968}}
        & \textcolor{red}{\textbf{0.062}} & \textcolor{red}{\textbf{0.945}}
        & \textcolor{red}{\textbf{0.240}} & 0.768
        & \textcolor{red}{\textbf{0.231}} & \textcolor{red}{\textbf{0.765}}
        & \textcolor{red}{\textbf{0.118}} & \textcolor{red}{\textbf{0.891}} \\

        \specialrule{0.75pt}{1.5pt}{2.5pt}

        \multirow{8}{*}{10000} 
        & BOSS
        & 0.446 & 0.674
        & 0.500 & 0.695
        & 0.260 & 0.815
        & 0.298 & 0.715
        & 0.500 & 0.750 \\
        & FCI
        & 0.185 & 0.834
        & 0.375 & 0.648
        & 0.240 & 0.787
        & 0.269 & 0.729
        & \underline{\textcolor{blue}{0.294}} & \underline{\textcolor{blue}{0.795}} \\
        & GES
        & \underline{\textcolor{blue}{0.098}} & \underline{\textcolor{blue}{0.907}}
        & \underline{\textcolor{blue}{0.188}} & \underline{\textcolor{blue}{0.834}}
        & 0.240 & \underline{\textcolor{blue}{0.838}}
        & \underline{\textcolor{blue}{0.183}} & \underline{\textcolor{blue}{0.829}}
        & 0.500 & 0.750 \\
        & GRaSP
        & 0.402 & 0.653
        & 1.000 & 0.255
        & 0.400 & 0.726
        & 0.490 & 0.619
        & 0.500 & 0.750 \\
        & ICA-LiNGAM
        & 0.598 & 0.561
        & 0.438 & 0.567
        & 0.860 & 0.350
        & 0.558 & 0.462
        & 0.529 & 0.508 \\
        & PC
        & \underline{\textcolor{blue}{0.098}} & \underline{\textcolor{blue}{0.907}}
        & 0.375 & 0.648
        & 0.260 & 0.763
        & 0.308 & 0.682
        & 0.500 & 0.573 \\
        & Uniform CEA
        & \underline{\textcolor{blue}{0.098}} & \underline{\textcolor{blue}{0.907}}
        & \underline{\textcolor{blue}{0.188}} & \underline{\textcolor{blue}{0.834}}
        & \underline{\textcolor{blue}{0.220}} & 0.800
        & 0.192 & 0.807
        & 0.500 & 0.750 \\
        & \textbf{CEA (Ours)}
        & \textcolor{red}{\textbf{0.022}} & \textcolor{red}{\textbf{0.979}}
        & \textcolor{red}{\textbf{0.062}} & \textcolor{red}{\textbf{0.945}}
        & \textcolor{red}{\textbf{0.080}} & \textcolor{red}{\textbf{0.922}}
        & \textcolor{red}{\textbf{0.144}} & \textcolor{red}{\textbf{0.858}}
        & \textcolor{red}{\textbf{0.059}} & \textcolor{red}{\textbf{0.945}} \\

        \specialrule{0.75pt}{1.5pt}{2.5pt}

        \multirow{8}{*}{Avg} 
        & BOSS
        & 0.381 & 0.713
        & 0.531 & 0.645
        & 0.345 & 0.747
        & 0.351 & 0.670
        & 0.515 & 0.694 \\
        & FCI
        & 0.212 & 0.795
        & 0.422 & 0.588
        & 0.315 & 0.711
        & 0.337 & 0.648
        & \underline{\textcolor{blue}{0.309}} & \underline{\textcolor{blue}{0.761}} \\
        & GES
        & \underline{\textcolor{blue}{0.139}} & \underline{\textcolor{blue}{0.865}}
        & \underline{\textcolor{blue}{0.219}} & \underline{\textcolor{blue}{0.806}}
        & 0.315 & \underline{\textcolor{blue}{0.772}}
        & 0.303 & \underline{\textcolor{blue}{0.688}}
        & 0.551 & 0.597 \\
        & GRaSP
        & 0.307 & 0.737
        & 0.641 & 0.497
        & 0.380 & 0.721
        & 0.430 & 0.594
        & 0.522 & 0.659 \\
        & ICA-LiNGAM
        & 0.576 & 0.553
        & 0.391 & 0.628
        & 0.685 & 0.440
        & 0.663 & 0.376
        & 0.596 & 0.443 \\
        & PC
        & 0.190 & 0.808
        & 0.422 & 0.588
        & 0.305 & 0.712
        & 0.378 & 0.600
        & 0.551 & 0.525 \\
        & Uniform CEA
        & 0.147 & \underline{\textcolor{blue}{0.865}}
        & 0.281 & 0.732
        & \underline{\textcolor{blue}{0.260}} & 0.755
        & \underline{\textcolor{blue}{0.300}} & 0.683
        & 0.456 & 0.691 \\
        & \textbf{CEA (Ours)}
        & \textcolor{red}{\textbf{0.049}} & \textcolor{red}{\textbf{0.952}}
        & \textcolor{red}{\textbf{0.093}} & \textcolor{red}{\textbf{0.918}}
        & \textcolor{red}{\textbf{0.140}} & \textcolor{red}{\textbf{0.864}}
        & \textcolor{red}{\textbf{0.207}} & \textcolor{red}{\textbf{0.793}}
        & \textcolor{red}{\textbf{0.169}} & \textcolor{red}{\textbf{0.838}} \\
    \bottomrule
    \end{tabular}
    }\end{center}
\end{table}
\section{Additional Visualizations}
\label{app:additional_visualizations}
\subsection{Candidate Coverage Visualization}
\label{app:candidate_coverage_visualization}
Figure~\ref{fig:candidate_coverage_asia} visualizes the causal graphs recovered by different SCD experts on the Asia dataset, together with the final graph produced by CEA and the ground-truth graph. This example illustrates the candidate-coverage limitation of CEA. Since CEA is designed as an ensemble and reweighting framework, it only aggregates and calibrates relations proposed by the experts, rather than allowing the LLM to freely introduce new causal relations. Consequently, if a true relation is missed by all experts, then it cannot be recovered by CEA. In the Asia example, the ground-truth graph contains the edge \texttt{asia} $\to$ \texttt{tub}. However, none of the SCD experts shown in Figure~\ref{fig:candidate_coverage_asia} identifies this relation. As a result, this edge is absent from the expert candidate pool and is therefore also absent from the final CEA graph. This behavior reflects a deliberate design trade-off: CEA avoids LLM-generated hallucination edges and keeps the final graph grounded in data-driven SCD outputs, but its recall is bounded by the coverage of the expert's candidate set.

\begin{figure}[H]
    \centering
    \includegraphics[width=\textwidth]{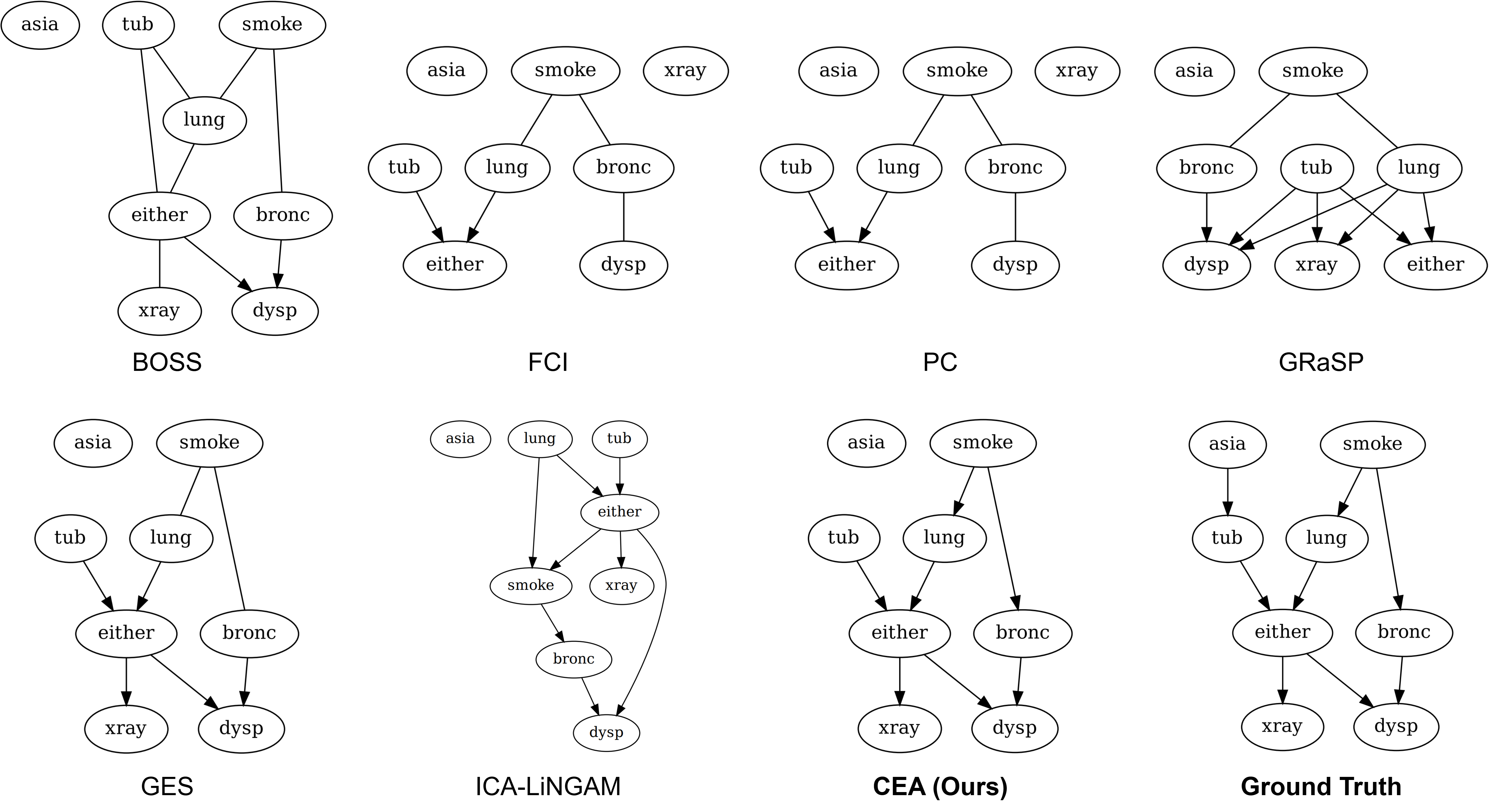}
    \caption{Candidate coverage visualization on the Asia dataset. The figure shows causal graphs recovered by individual SCD experts, the final graph produced by CEA, and the ground-truth graph. Since the true edge \texttt{asia} $\to$ \texttt{tub} is missed by all experts, it is not included in the CEA's graph.}
    \label{fig:candidate_coverage_asia}
\end{figure}

\subsection{Additional Weight-performance Alignment}
\label{app:additional_weight_alignment}
We provide additional visualizations of the weight-performance alignment between LLM-assigned expert weights and empirical expert performance. Figures~\ref{fig:weight_alignment_insurance_rnb}--\ref{fig:weight_alignment_child_auc} further illustrate that CEA's LLM-guided reweighting tends to assign higher weights to experts that are empirically more reliable.

\begin{figure}[H]
    \centering
    \includegraphics[width=0.81\linewidth]{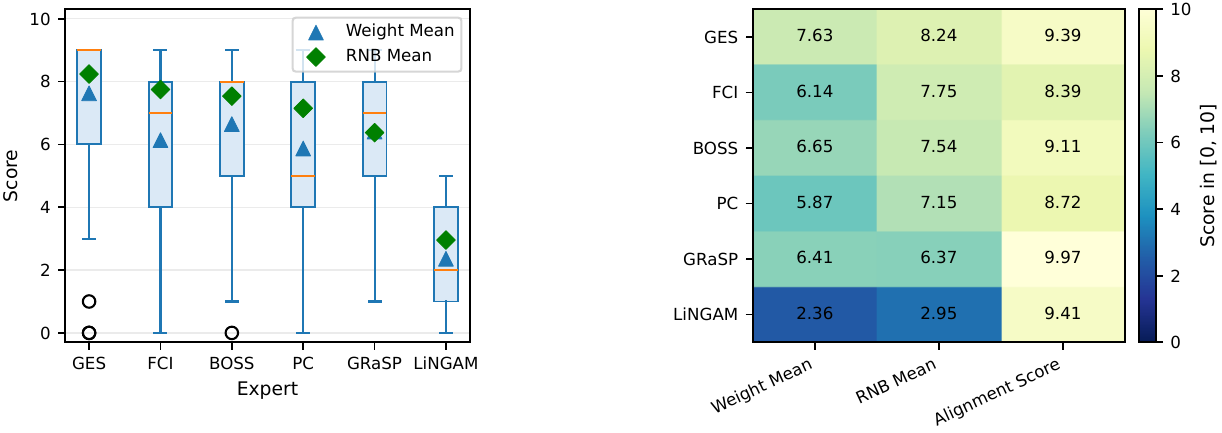}
    \caption{Weight-performance alignment on the Insurance dataset under the RNB metric.}
    \label{fig:weight_alignment_insurance_rnb}
\end{figure}

\begin{figure}[H]
    \centering
    \includegraphics[width=0.81\linewidth]{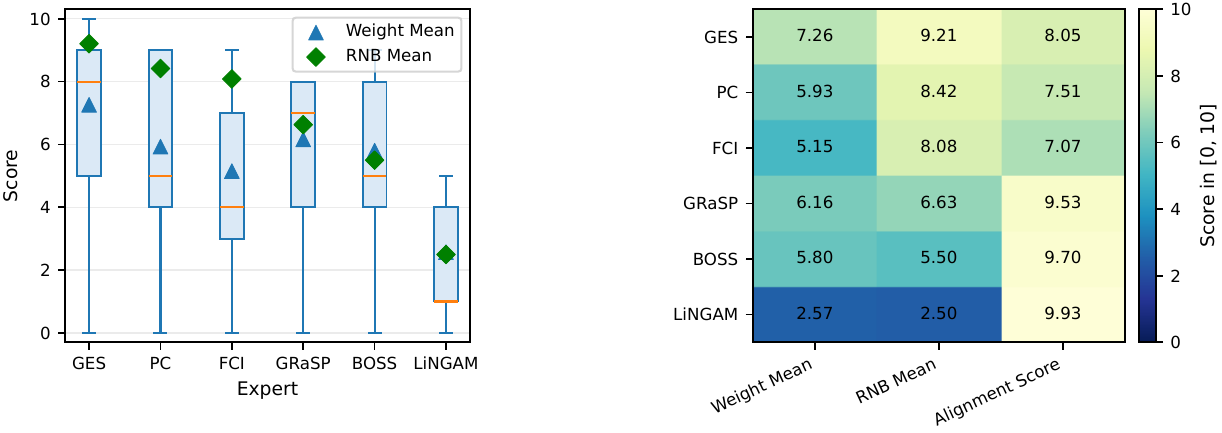}
    \caption{Weight-performance alignment on the Alarm dataset under the RNB metric.}
    \label{fig:weight_alignment_alarm_rnb}
\end{figure}

\begin{figure}[H]
    \centering
    \includegraphics[width=0.81\linewidth]{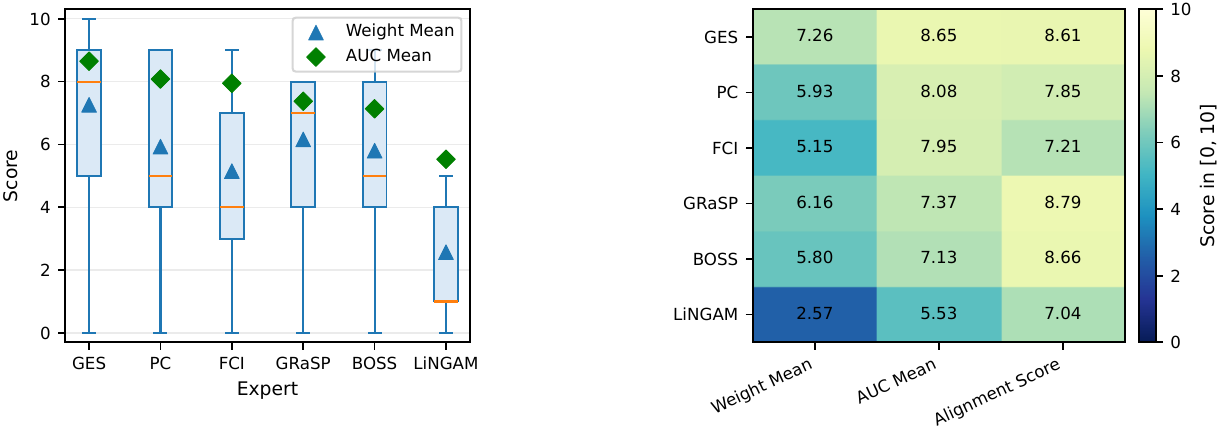}
    \caption{Weight-performance alignment on the Alarm dataset under the AUC metric.}
    \label{fig:weight_alignment_alarm_auc}
\end{figure}

\begin{figure}[H]
    \centering
    \includegraphics[width=0.81\linewidth]{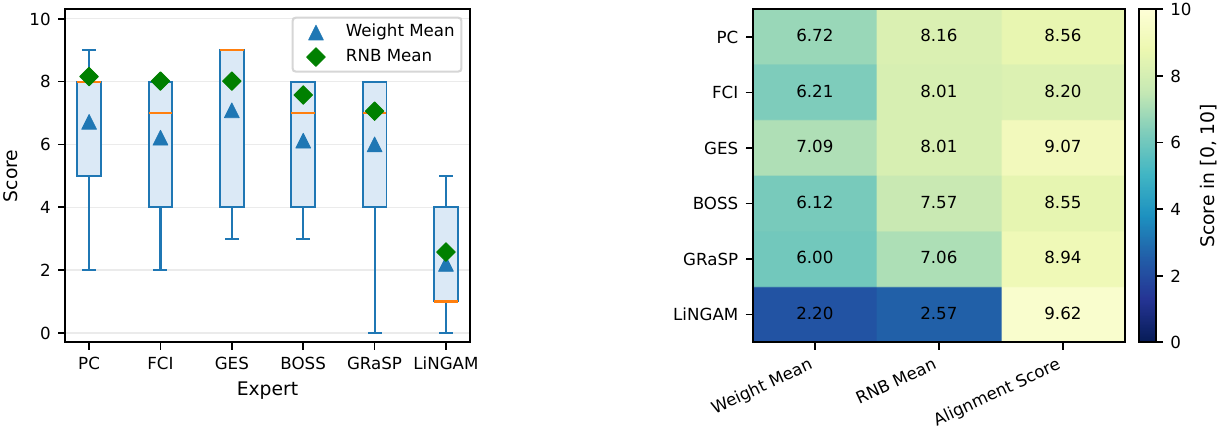}
    \caption{Weight-performance alignment on the Child dataset under the RNB metric.}
    \label{fig:weight_alignment_child_rnb}
\end{figure}

\begin{figure}[H]
    \centering
    \includegraphics[width=0.81\linewidth]{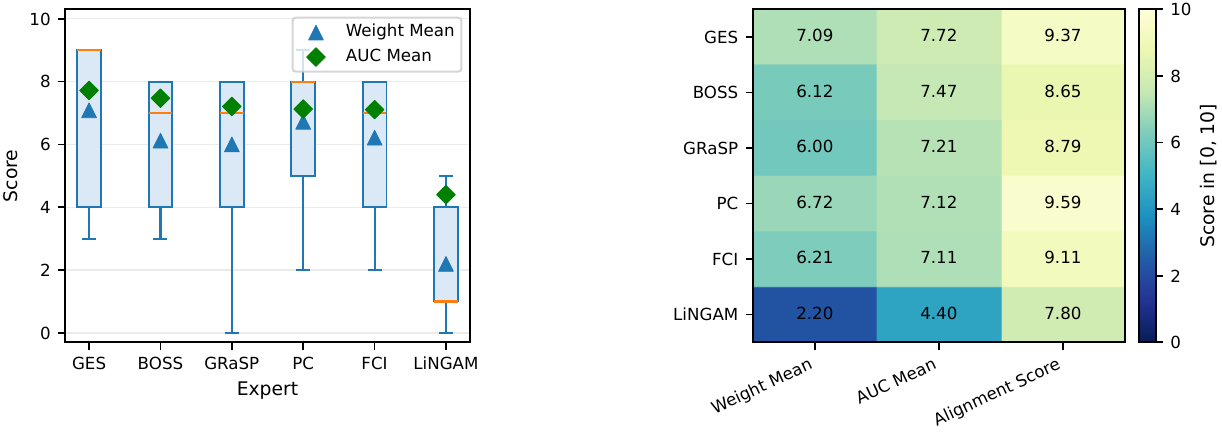}
    \caption{Weight-performance alignment on the Child dataset under the AUC metric.}
    \label{fig:weight_alignment_child_auc}
\end{figure}
%%%%%%%%%%%%%%%%%%%%%%%%%%%%%%%%%%%%%%%%%%%%%%%%%%%%%%%%%%%%

\newpage

\end{document}